%%%%%%%%%%%%%%%%%%%%%%%%%%%%%%%%%%%%%%%%%%%%%%%%%%%%%%%%%%%%%%%%%%
%%%%%%%% ICML 2015 EXAMPLE LATEX SUBMISSION FILE %%%%%%%%%%%%%%%%%
%%%%%%%%%%%%%%%%%%%%%%%%%%%%%%%%%%%%%%%%%%%%%%%%%%%%%%%%%%%%%%%%%%

% Use the following line _only_ if you're still using LaTeX 2.09.
%\documentstyle[icml2015,epsf,natbib]{article}
% If you rely on Latex2e packages, like most moden people use this:
\documentclass{article}
\usepackage[titletoc]{appendix}
% use Times
\usepackage{times, natbib}
% For figures
\usepackage{graphicx} % more modern
\usepackage{subfigure} 
\usepackage{multirow}
\usepackage{delarray,mathtools}
% For citations
\usepackage{natbib}

% For algorithms
\usepackage{algorithm, bm}

\usepackage{algorithmic}

% As of 2011, we use the hyperref package to produce hyperlinks in the
% resulting PDF.  If this breaks your system, please commend out the
% following usepackage line and replace \usepackage{icml2015} with
% \usepackage[nohyperref]{icml2015} above.
\usepackage{hyperref}

% Packages hyperref and algorithmic misbehave sometimes.  We can fix
% this with the following command.

% Employ the following version of the ``usepackage'' statement for
% submitting the draft version of the paper for review.  This will set
% the note in the first column to ``Under review.  Do not distribute.''
%\usepackage{icml2015} 
\usepackage{amsmath, amssymb, amsthm, tikz}
\usepackage{blkarray}
\usepackage{xcolor}
\usepackage[all,cmtip]{xy}

% Employ this version of the ``usepackage'' statement after the paper has
% been accepted, when creating the final version.  This will set the
% note in the first column to ``Proceedings of the...''
%\usepackage[accepted]{icml2015}

\newcommand{\R}{\mathbb{R}}

\newcommand{\cone}{\mathrm{cone}}

\newcommand{\bh}{\mathbf{h}}

\newcommand{\N}{\mathcal{N}}

\newcommand{\Span}{\mathrm{span}}

\newcommand{\rank}{\mathrm{rank}}

\newcommand{\col}{\mathrm{col}}
\newcommand{\bv}{\mathbf{v}}
\newcommand{\zero}{\mathbf{0}}

\theoremstyle{theorem}
\newtheorem{thm}{Theorem}

\theoremstyle{theorem}
\newtheorem{prop}{Proposition}
\theoremstyle{theorem}
\newtheorem{cor}{Corollary}

\theoremstyle{lemma}

\theoremstyle{remark}
\newtheorem{remark}{Remark}

\theoremstyle{definition}
\newtheorem{defn}{Definition}

\theoremstyle{example}
\newtheorem{ex}{Example}

\theoremstyle{definition}

\newcommand\undermat[2]{%
  \makebox[0pt][l]{$\smash{\underbrace{\phantom{%
    \begin{matrix}#2\end{matrix}}}_{\text{$#1$}}}$}#2}

% The \icmltitle you define below is probably too long as a header.
% Therefore, a short form for the running title is supplied here:
%\icmltitlerunning{A Characterization of the Non-Uniqueness of Nonnegative Matrix Factorizations}

\begin{document} 

\title{A Characterization of the Non-Uniqueness of Nonnegative Matrix Factorizations}

% It is OKAY to include author information, even for blind
% submissions: the style file will automatically remove it for you
% unless you've provided the [accepted] option to the icml2015
% package.

\author{Finale Doshi-Velez, \hskip0.5cm Weiwei Pan, \\ Harvard University, \\Cambridge, MA 02138 USA \hskip0.5cm Harvard University, Cambridge, MA 02138 USA}

\author{
  Pan, Weiwei\\
  Harvard University, Cambridge, MA 02138 USA
  \and
  Doshi-Velez, Finale\\
  Harvard University, Cambridge, MA 02138 USA
}

\maketitle
% You may provide any keywords that you 
% find helpful for describing your paper; these are used to populate 
% the "keywords" metadata in the PDF but will not be shown in the document
%\icmlkeywords{boring formatting information, machine learning, ICML}

\begin{abstract} 
Nonnegative matrix factorization (NMF) is a popular dimension reduction technique that produces interpretable decomposition of the data into parts.  However, this decompostion is not generally identifiable (even up to permutation and scaling).  While other studies have provide criteria under which NMF is identifiable, we present the first (to our knowledge) characterization of the \emph{non-identifiability} of NMF.  We describe exactly when and how non-uniqueness can occur, which has important implications for algorithms to efficiently discover alternate solutions, if they exist.  
\end{abstract}

%%%%%%%%%%%%%%%%%%%%%%%%%%%%%%%%%%%%%%%%%%%
%%%%%%%%%%%%%%%%%%%%%%%%%%%%%%%%%%%%%%%%%%%
\section{Introduction}
Nonnegative matrix factorization (NMF) is a technique that is widely applied in the analysis of high dimensional data due to its ability to automatically extract localized features from the data that is interpretable in context of the application \citep{Lee}. Typically, nonnegative matrix factorization is presented as an optimization problem. That is, given some set of non-negative data realized as column vectors in a matrix $S\in\R^{M\times N}$, we seek a pair of nonnegative matrices $H \in \R^{M\times R}$, $W \in \R^{R\times N}$ such that the product $HW$ best approximates $S$. The solution $(H, W)$ then gives a decomposition of the data into ``parts" in the sense that the data in $S$ is represented as sums of the column vectors of $H$, weighted by the column vectors in $W$. 

The nonnegativity constraints on the factors $W$ and $H$ allows both the column vectors in $W$ and the reconstruction of $S$ as linear combinations $H$ to be meaningfully interpreted when the data corresponds to real, physical quantities. For example, in image analysis, where $W$ yields a set of basis image features and $H$ a set of weights, decompositions by NMF might be preferred over tools like PCA due to the difficulties of interpreting negative basis entries and negative weights. One of the fist examples of image feature extraction via NMF is provided by Lee and Seung \citep{Lee}, who decomposed facial images into features such as eyes, noses and lips.  

A notable property of NMF is that it is ill-posed.  As one example, given any pair of nonnegative factors $(W, H)$ of $S$, any invertible matrix $Q$ produces an equivalent pair of factors $(WQ, Q^{-1}H)$. When $Q$ is a monomial matrix---a permutation with positive entries, for example, $(WQ, Q^{-1}H)$ is another nonnegative factorization of $S$.  More significantly, there are cases where non-monomial choices for $Q$ results in alternate nonnegative factorizations, that is, the parameters of the underlying model for the data is not identifiable through NMF. The non-uniqueness of NMF is generally considered to be a weakness of the method and a number of studies exist in literature which give conditions on or preprocessing for the data matrix $S$ under which the resulting NMF is unique \citep{Donoho, Huang, Laurberg, Gillis}, or otherwise ensure uniqueness of factorization by the addition priors on the factor and regularization terms in the objective function \citep{Hoyer, Kim}. 

However, it is generally hard to ascertain which, if any, of the assumptions above are relevant to a particular data set.  Gillis et. al. points out that that different factorizations of data in the context of topic modeling yield different topics \citep{Gillis}.  Practitioners often resort to heuristic approaches, such as running an optimization procedure random restarts, to discover variation in NMF solutions. For example, for an application of NMF clustering to protein functional groupings, an ensemble of diverse solutions of NMF solutions provide useful features for various downstream tasks \citep{Greene}.  Aside from these heuristic explorations, there is a lack of systematic examination of the non-uniqueness of NMF.  In particular, (1) the size of the set of exact solutions of an NMF problem, and (2) the underlying structure of the solution set of an NMF problem are both currently poorly understood.  

In this paper, we describe a framework for characterizing the set of \emph{all} exact solutions to an NMF problems.  Specifically, we show that non-uniqueness can occur exactly in one of three ways, each of which have non-trivial examples.  We prove properties about these types and discuss how these insights may be used to characterize and find diverse solutions in approximate NMF settings.

%%%%%%%%%%%%%%%%%%%%%%%%%%%%%%%%%%%%%%%%%%%
%%%%%%%%%%%%%%%%%%%%%%%%%%%%%%%%%%%%%%%%%%%
\section{Brief Review of Nonnegative Matrix Factorization}\label{sec:nmf}
Let $S$ be a nonnegative matrix in $\R_+^{M\times N}$.  We will denote
the $(m, n)$-th entry by $S_{m, n}$, the $m$-th row of a matrix $S$ by
$S_{m,\cdot}$, and the $n$-th column of $S$ by $S_{\cdot,n}$.  Let
$R\in \mathbb{N}_+$ with $R\leq \min\{ N, M\}$ be the desired rank of
the factorization. A \emph{rank-$R$ nonnegative matrix factorization}
of the matrix $S$ is a pair of factors $W\in \R_+^{M\times R}$, $H \in
\R_+^{R\times N}$ such that
\begin{align}\label{eq:obj}
(W, H) = \underset{W, H \geq 0}{\mathrm{argmin}} L( S - WH )  % \|S - WH\|_F
\end{align}
% finale: made more general where $\|\cdot\|_F$ denotes the matrix Frobenius norm 
where $L(\cdot)$ represents a loss function, such as the matrix
Frobenius norm $\|S - WH\|_F$.  If $S = WH$, we call the pair $(W, H)$
an \emph{exact NMF}; the minimum rank $R$ such that $S$ admits an
exact NMF is called the \emph{nonnegative rank of $S$} and is denoted
$\rank_+(S)$.
% finale: used? if $\|S - WH\| \leq \epsilon$, we call the pair $(W, H)$ an
% \emph{$\epsilon$-NMF}.

Each column of $S$ is thus a nonnegative linear combination (weighted
by some column in $H$) of the columns of $W$.  
% the latter are called \emph{components} of $S$.
This insight results in the geometric interpretation of NMF: there
exists a bijection between nonnegative factorization of $S$ and
simplicial cone 
\begin{align}
\cone(W) = \left\{W\bm{\alpha}: \bm{\alpha}\in \R_+^R \right\}
\end{align}
in the positive orthant $\R_+^{m}$ containing the column vectors of
$S$.  The columns of $W$ are called the \emph{generators} of the cone
$\cone(W)$.  

Given a solution $(W,H)$, we can trivially find other solutions
$(WM,M^{-1}H)$ that result in the same loss.  An NMF is \emph{unique}
if all solutions can be represented by a monomial matrix $M$, otherwise, we call the NMF \emph{non-identifiable}.  
% finale: turned P to monomial to be more general.  We say an NMF
% model is \emph{non-identifiable} if $S = WH = W'H'$ for pairs $(W,
% H)$ and $(W', H')$, such that there does not exist a permutation
% matrix $P \in \GLr$ where $WP = W'$.  
Below we review two sufficient conditions for uniqueness
(Theorems~\ref{thm:sep} and~\ref{thm:spread}) and two necessary
conditions for uniqueness (Theorems~\ref{thm:bclose}
and~\ref{thm:huang}).  

\begin{thm}\label{thm:sep}\citep{Donoho} Fix $P, A\in \N$ such that $A >2$. The NMF $S = WH$ is unique in $\col(W)$ provided that the following conditions are satisfied.
\begin{enumerate}
\item[{[R1]}] \emph{Generative Model.} We have $R = A\cdot P$. For each pair $1\leq a \leq A$ and $1\leq p\leq P$ there exists a unique $1\leq r(p, a) \leq R, $ such that each column of $S$ can be represented as
\begin{align}
S^{n} = \sum_{p=1}^P\sum_{a=1}^A H_{r(p, a), n}W^{r(p, a)},
\end{align}
where $H_{r(p, a), n}$ denotes the entry in $H$ at the $r(p, a), n$-position and $W^{r(p, a)}$ denotes the $r(p, a)$-th column of $W$. We say that $W_{r(p, a)}$ is the $p$-th part in the $a$-th articulation.

\item[{[R2]}] \emph{Complete Factorial Sampling.} For any set of indices $I = \{r(1, a_1), \ldots, r(P, a_P)\}$, where each $1\leq a_p\leq A$, we have some $1\leq n\leq N$ where
\begin{align}
H_{r, n} &\neq 0, \;\;\;\; r \in I\\
H_{r, n} &= 0, \;\;\;\; r \notin I
\end{align}

\item[{[R3]}] \emph{Separability.} For each pair $1\leq a \leq A$ and $1\leq p\leq P$, there exist $1\leq m(p, a)\leq M$ such that 
\begin{align}
W_{m(p, a), r} &\neq 0,\;\;\;\; r = r(p, a)\\
W_{m(p, a), r} &= 0,\;\;\;\; r \neq r(p, a)
\end{align}
\end{enumerate}
\end{thm}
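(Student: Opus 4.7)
My plan is to combine the separability condition R3, which identifies specific ``anchor'' rows of the data, with the complete factorial sampling condition R2, which supplies enough diverse columns to rule out spurious factorizations. The target is to show that any alternative exact factorization $(W', H')$ of $S$ agrees with $(W, H)$ up to a monomial transformation $M$, i.e.\ $W' = W M$ and $H' = M^{-1} H$.

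First I would extract from R3 that for each pair $(p, a)$, the row $m(p,a)$ of $S = WH$ collapses to a single term,
\begin{align}
S_{m(p,a), \cdot} \;=\; W_{m(p,a), r(p,a)} \cdot H_{r(p,a), \cdot},
\end{align}
so the row $H_{r(p,a), \cdot}$ is determined by $S$ up to a positive scalar. Applied to a second exact factorization $S = W' H'$, this forces $H_{r(p,a), \cdot}$ to lie in the conical hull of the rows of $H'$. If I can upgrade this to the statement that $H_{r(p,a), \cdot}$ is in fact a positive multiple of a single row of $H'$, I obtain a map from the indices of $H$ to those of $H'$; by symmetry this map is a bijection, which yields the permutation $P_\pi$ underlying the monomial $M$.

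Second, I would use R2 to preclude any genuine conical combination of more than one row of $H'$ yielding the anchor row $H_{r(p,a), \cdot}$. Concretely, complete factorial sampling produces columns of $H$ whose supports are precisely the index sets $I = \{r(1,a_1), \dots, r(P,a_P)\}$, so some columns contain $r(p,a)$ while others exclude it. Transferring these support patterns to $H'$ via $W'H' = WH$ should yield linear relations that collapse the conical combination onto a single row. Once the permutation is pinned down, I would read off the positive diagonal factor of $M$ from the ratios between $H_{r(p,a), \cdot}$ and its matched row in $H'$; pushing $H' = M^{-1} H$ back through $WH = W'H'$ using the full row rank of $H$ (which itself follows from R2, since factorial sampling supplies $R$ linearly independent columns of $H$) then recovers $W' = WM$.

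The main obstacle I anticipate is precisely the upgrade step: going from ``$H_{r(p,a), \cdot}$ lies in the conical hull of the rows of $H'$'' to ``$H_{r(p,a), \cdot}$ is a positive multiple of a single row of $H'$''. Separability alone does not exclude nontrivial conical dependencies among rows of $H'$, so the combinatorial force of R2 must be deployed carefully: for each hypothetical dependency I need to exhibit a column whose support pattern witnesses a contradiction. Organising this case analysis over all $A^P$ factorial configurations cleanly---without redundant bookkeeping---is the technical heart of the argument, and it is also where the hypothesis $A > 2$ is likely consumed (with $A = 2$, the support patterns are too symmetric to distinguish a part from its conical competitors).
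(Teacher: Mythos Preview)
The paper does not supply its own proof of this theorem: Theorem~\ref{thm:sep} is stated in Section~\ref{sec:nmf} as a review result, explicitly attributed to \citet{Donoho}, and is used thereafter only as a black box (e.g.\ in the proofs of the Propositions and Theorems of Section~\ref{sec:sep}). There is therefore no in-paper argument against which your proposal can be compared.

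That said, your sketch is a reasonable outline of the Donoho--Stodden argument in spirit. The separability condition R3 is indeed what pins each anchor row of $S$ to a single row of $H$ up to scale, and complete factorial sampling R2 is what provides the combinatorial leverage to rule out nontrivial conical combinations. You are right to flag the ``upgrade step'' as the crux, and right that this is where $A>2$ enters: the paper itself notes (immediately after stating the theorem) that the hypothesis $A>2$ is absent from the original statement in \citet{Donoho} but is in fact needed, and exhibits an explicit $A=2$ counterexample (Example~1) in which a separable factorial articulation family admits two distinct factorizations in $\col(W)$. So your instinct about where the case analysis becomes delicate, and why $A=2$ fails, aligns with the paper's own observations---but a comparison of proofs is not possible here because the paper offers none.
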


A matrix $S \in\R_+^{M\times N}$ satisfying the conditions [R1] through [R3] of Theorem \ref{thm:sep} is called a \emph{separable factorial articulation family}. Intuitively, a separable factorial articulation family is a collection of nonnegative combinations of a set of $P$ parts, each in $A$ number of possible articulations (realized as $R = P\cdot A$ number vectors in $\R^M$). The presence of any articulation of a particular part in a given combination can be detected by checking the value of a uniquely associated position in the resulting vector. The complete factorial sampling condition ensures that all possible combinations of parts and articulations appear in $S$ (see Example (\ref{ex:type2})). 

We note that the condition $A>2$ is not included in the statement of Theorem \ref{thm:sep} as it appears in \cite{Donoho}. However, we find that when $A = 2$, one obtains examples of separable factorial articulation families with $S = WH = W'H'$, where $\col(W') \subset \col(W)$. 

\begin{ex}
Let $S = 
\left( \begin{matrix}
1 &0 & 1 & 0\\
0 &1 & 0 & 1\\
1 &1 & 0 & 0\\
0 &0 & 1 & 1\\
0 &0 & 0 & 0\\
\end{matrix}\right)$. Note that $S$ admits the following factorization
\begin{align}
S =  \left( \begin{matrix}
1 &0 & 0 & 0\\
0 &1 & 0 & 0\\
0 &0 & 1 & 0\\
0 &0 & 0 & 1\\
\undermat{W}{0 &0 & 0 & 0}\\
\end{matrix}\right) \left( \begin{matrix}
1 &0 & 1 & 0\\
0 &1 & 0 & 1\\
1 &1 & 0 & 0\\
\undermat{H}{0 &0 & 1 & 1}\\
\end{matrix}\right)
\end{align}
\vskip0.2cm
Thus, $S = WH$ is a complete factorial family: $W$ is separable with two parts and two articulations each (without loss of generality, we may interpret the first two columns of $W$ to be part 1 and the last two to be part 2), and $H$ is a complete factorial sampling. On the other hand, we also have the factorization $S = S I_{4\times 4}$. Thus, $S$ has two distinct factorizations in $\col(W)$.
\vskip0.01cm
${}$ \hfill $\square$
\end{ex}

\begin{thm}\citep{Laurberg}\label{thm:spread}
Let $R = \rank(W)$. The NMF $S = WH$ is unique in $\R^M$ provided that the following conditions are satisfied.
\begin{enumerate}
\item[{[R1]}] \emph{Sufficiently Spread.} For each $1 \leq r \leq R$, there is a corresponding $1 \leq n\leq N$ such that 
\begin{align}
H_{r, j} &\neq 0, \;\;\;\; j = n\\
H_{r,j} &= 0, \;\;\;\; j \neq n
\end{align}
\item[{[R2]}] \emph{Strongly Boundary Close.} The matrix $W$ satisfies 
\begin{enumerate}
\item[{[A]}] \emph{Boundary Close.} For each $1 \leq r \leq R$, there is a corresponding $1\leq m\leq M$ such that 
\begin{align}
W_{i, r} &= 0, \;\;\;\; i = m\\
W_{i,r} &\neq 0, \;\;\;\; i \neq m
\end{align}
\item[{[B]}] There exist a permutation matrix $P$ such that for every $1 \leq m\leq R$, there is a set $\{r_{1}, \ldots, r_{R-m}\}$ fulfilling:
\begin{align}
(WP)_{m, r_j} = 0, \;\;\;\;  j\leq K-m;
\end{align}
and the matrix
\begin{align}
\left[ 
\begin{matrix}
(WP)_{m+1, r_1} &\ldots &(WP)_{m+1, r_{R-m}}\\
\vdots &\ddots &\vdots\\
(WP)_{R, r_1} &\ldots &(WP)_{R, r_{R-m}}\\
\end{matrix}
\right]
\end{align}
is invertible.
\end{enumerate}
\end{enumerate}
\end{thm}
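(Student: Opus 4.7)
The plan is to start with an arbitrary alternative nonnegative factorization $S = W'H'$ and show that $(W', H')$ differs from $(W, H)$ only by a monomial matrix. The first step is to exploit the sufficiently spread condition [R1] to recover each column of $W$, up to a positive scalar, as a column of $S$: for each $1 \leq r \leq R$, let $n(r)$ be the column index where $H_{r, n(r)}$ is the unique nonzero entry of $H_{\cdot, n(r)}$, so that $S_{\cdot, n(r)} = H_{r, n(r)} \, W_{\cdot, r}$. Since each $S_{\cdot, n}$ is a nonnegative combination of the columns of $W'$, it follows that $W = W' A$ for some nonnegative $A \in \R_+^{R \times R}$. The spread condition also forces $H$, and hence $S$, to have rank $R$, so $W'$ must have full column rank $R$ and $A$ is invertible; substituting into $WH = W'H'$ then yields $H' = AH$. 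The problem reduces to showing that $A$ is monomial.

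The second step extracts combinatorial support constraints on $A$ from the boundary close condition [R2][A]. For each $r$, let $m(r)$ be the row supplied by [A], so that $W_{m(r), r} = 0$ while $W_{m(r), r'} > 0$ for $r' \neq r$. Reading the $m(r)$-th row of $W = W'A$ gives $0 = \sum_s W'_{m(r), s}\, A_{s, r}$, and nonnegativity forces $A_{s, r} = 0$ whenever $W'_{m(r), s} > 0$. Writing $T_r := \{s : W'_{m(r), s} > 0\}$ and $C_r := \{s : A_{s, r} > 0\}$, I obtain $T_r \cap C_r = \emptyset$, while the strict positivity $W_{m(r), r'} > 0$ for $r' \neq r$ forces $T_r \cap C_{r'} \neq \emptyset$. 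Thus each $T_r$ meets every $C_{r'}$ with $r' \neq r$ but avoids $C_r$, a combinatorial bipartite-like condition that prevents the supports of the columns of $A$ from being too large.

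The main step, which I expect to be the principal obstacle, is to use the staircase structure of condition [R2][B] to show that each $C_r$ is a singleton, which is equivalent to $A$ being a generalized permutation. The plan is to run an induction driven by the permutation $P$ of [B]: at stage $m = 1, 2, \ldots, R$, [B] supplies $R - m$ columns of $WP$ that vanish at row $m$, together with an invertible $(R - m) \times (R - m)$ block drawn from rows $m + 1, \ldots, R$ of those columns. Translating the zero pattern via $WP = W'(AP)$ yields row-support constraints on the reordered matrix $AP$, while the invertibility of the block yields a full-rank constraint on a matching submatrix of $A$. Combined with the intersection conditions from the previous step, one may peel off one column of $A$ per stage, forcing its support to collapse to a single index. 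The delicate point is to orchestrate the row-support constraints from [A] and the column-rank constraints from [B] simultaneously without circularity; once the induction terminates, $A$ is monomial, and therefore $(W', H') = (W A^{-1}, A H)$ lies in the monomial equivalence class of $(W, H)$, establishing uniqueness.
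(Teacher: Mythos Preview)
The paper does not supply its own proof of this theorem. Theorem~\ref{thm:spread} is quoted as a background result from \citet{Laurberg} in the review section (Section~\ref{sec:nmf}), alongside Theorems~\ref{thm:sep}, \ref{thm:bclose}, and \ref{thm:huang}, all of which are stated without proof and attributed to prior literature. There is therefore nothing in the paper to compare your proposal against.

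For what it is worth, your outline follows the standard line of attack for Laurberg-type uniqueness results: use the sufficiently-spread condition on $H$ to force any alternative $W'$ to satisfy $W = W'A$ with $A$ nonnegative and invertible, then use the zero-pattern and rank conditions on $W$ to force $A$ to be monomial. One caution: the statement of [R1] as printed in the paper has the indices in a form that, read literally, says each \emph{row} of $H$ has a single nonzero entry; your reading---that for each $r$ there is a \emph{column} $H_{\cdot,n(r)}$ supported only at row $r$---is the intended separability condition from the original source and is the one that makes your first step go through. The inductive peeling you sketch in the third step is the part that needs the most care, but the structure you describe (using the invertible lower block from [R2][B] at each stage to pin down one more column support of $A$) is the right mechanism.
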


Note that Theorems \ref{thm:sep} and \ref{thm:spread} are sufficient conditions. The following are two necessary conditions for uniqueness.

\begin{thm}\label{thm:bclose} \citep{Laurberg}
If the NMF $S=WH$ is unique, then $W$ is boundary close.
\end{thm}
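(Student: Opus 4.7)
The plan is to prove the contrapositive: if $W$ is not boundary close, then the NMF cannot be unique. So suppose that some column $W^r$ of $W$ has all entries strictly positive, which means $W^r$ lies in the \emph{interior} of $\R_+^M$ rather than on its boundary. The strategy is to exploit this interior position by shearing $W^r$ slightly toward another generator, with a compensating change in $H$ that preserves both $S=WH$ and nonnegativity.

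Concretely, pick any other column index $r'\neq r$ (we may assume $R\geq 2$; the theorem is vacuous for $R=1$, where the factorization is trivially monomial-unique) and let $M = I - \epsilon E_{r',r}$, where $E_{r',r}$ denotes the matrix with a single $1$ in position $(r',r)$ and zeros elsewhere. Because $E_{r',r}^2 = 0$ when $r\neq r'$, we have $M^{-1} = I + \epsilon E_{r',r}$. Define
\begin{align}
W' \;=\; WM, \qquad H' \;=\; M^{-1}H,
\end{align}
so automatically $W'H' = S$. The only column of $W'$ that differs from $W$ is column $r$, equal to $W^r - \epsilon W^{r'}$, and the only row of $H'$ that differs from $H$ is row $r'$, equal to $H_{r',\cdot} + \epsilon H_{r,\cdot}$. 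Thus $H'\geq 0$ automatically, and $W'\geq 0$ as long as $\epsilon$ is smaller than $\min\{ W_{i,r}/W_{i,r'} : W_{i,r'}>0\}$. This bound is a positive real number precisely because $W^r$ is strictly positive (or $+\infty$ if $W^{r'} = 0$, in which case any $\epsilon$ works).

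It remains to verify that $(W',H')$ is genuinely different from $(W,H)$ modulo monomial equivalence. Assuming $W$ has full column rank---the generic case, and the one most relevant to statements about uniqueness---any monomial-equivalent factorization $(W\tilde M, \tilde M^{-1}H)$ is uniquely determined by $\tilde M$, and $W\tilde M = WM$ forces $\tilde M = M$. But $M$ is not monomial: row $r'$ contains both the diagonal entry $1$ and the off-diagonal entry $-\epsilon$. Hence $(W',H')$ is a bona fide non-monomial alternative, contradicting uniqueness. The main delicacies in turning this sketch into a fully rigorous proof are the degenerate cases---rank-deficient $W$ (where non-uniqueness follows immediately from the nontrivial kernel of $W$), a zero row $H_{r,\cdot}$ (in which case $W^r$ may be deleted and $(W,H)$ is not really a rank-$R$ factorization), and $W^{r'} = 0$ for every $r'\neq r$ (forcing $\rank(S) = 1$, which is monomial-unique). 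Each of these reduces to a strictly smaller problem where the same argument or a trivial observation applies.
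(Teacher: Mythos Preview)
The paper does not prove Theorem~\ref{thm:bclose} itself; it is quoted from \citep{Laurberg} as part of the background review in Section~\ref{sec:nmf}, with no argument supplied. Your proof is correct and is essentially the standard argument in Laurberg et al.: if some column $W^r$ is strictly positive, the elementary shear $M = I - \epsilon E_{r',r}$ produces a non-monomial alternative $(WM, M^{-1}H)$ that remains nonnegative for small $\epsilon>0$, contradicting uniqueness. Your handling of the degenerate cases (rank-deficient $W$, zero rows of $H$, zero columns of $W$) is also appropriate.

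One caveat on definitions: the paper's restatement of ``boundary close'' in Theorem~\ref{thm:spread}[R2][A] literally demands that each column of $W$ have \emph{exactly} one zero entry with all other entries nonzero. Laurberg's original definition---and the one your contrapositive correctly negates---only requires each column to have \emph{at least} one zero. The necessary condition in Theorem~\ref{thm:bclose} only makes sense under that weaker reading, so your interpretation is the right one; negating the paper's literal wording would also oblige you to handle columns with two or more zeros, but such columns pose no obstruction to uniqueness, and the stronger condition is simply not necessary.
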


\begin{thm}\label{thm:huang} \citep{Huang}
Let 
\begin{align}
\mathcal{M}_r &= \{1\leq m \leq M : W_{m, r} \neq 0 \}\\
\mathcal{N}_r &= \{ 1\leq n \leq N: H_{r, n} \neq 0\}
\end{align}
If the NMF $S=WH$ is unique, then there does not exist $1\leq r_1< r_2 \leq R$ such that $\mathcal{M}_{r_1}\subset\mathcal{M}_{r_2}$ or $\mathcal{N}_{r_1}\subset\mathcal{N}_{r_2}$.
\end{thm}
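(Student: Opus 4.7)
The plan is to prove the contrapositive: if there exist indices $1 \leq r_1 < r_2 \leq R$ with $\mathcal{M}_{r_1} \subseteq \mathcal{M}_{r_2}$ or $\mathcal{N}_{r_1} \subseteq \mathcal{N}_{r_2}$, then we can explicitly construct an alternate nonnegative factorization $(W', H')$ of $S$ that is not obtainable from $(W, H)$ by a monomial transformation. The idea is that a nested support relation gives us ``room'' to shear one generator by a small amount of another while staying in the nonnegative orthant, and to absorb the change on the other factor.

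Concretely, first handle the case $\mathcal{M}_{r_1} \subseteq \mathcal{M}_{r_2}$ (the $\mathcal{N}$ case is symmetric via the transposed factorization $S^T = H^T W^T$). For $\epsilon > 0$ I would define the shear $T_\epsilon = I - \epsilon\, e_{r_1} e_{r_2}^T$; since $r_1 \neq r_2$ one has $T_\epsilon^{-1} = I + \epsilon\, e_{r_1} e_{r_2}^T$. Setting $W' = W T_\epsilon$ and $H' = T_\epsilon^{-1} H$ immediately gives $W'H' = WH = S$. The matrix $W'$ agrees with $W$ except that its $r_2$-th column is $W_{\cdot, r_2} - \epsilon W_{\cdot, r_1}$: for $i \notin \mathcal{M}_{r_1}$ we have $W_{i, r_1} = 0$, so the entry is unchanged, while for $i \in \mathcal{M}_{r_1} \subseteq \mathcal{M}_{r_2}$ the entry $W_{i, r_2}$ is strictly positive, so choosing $\epsilon < \min_{i \in \mathcal{M}_{r_1}} W_{i, r_2}/W_{i, r_1}$ keeps the column nonnegative. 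On the other side, $H'$ differs from $H$ only in row $r_1$, which becomes $H_{r_1, \cdot} + \epsilon H_{r_2, \cdot}$, still nonnegative.

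It remains to show $(W', H')$ cannot be written as $(WM, M^{-1}H)$ for a monomial matrix $M$. Proper containment $\mathcal{M}_{r_1} \subsetneq \mathcal{M}_{r_2}$ forces the two columns $W_{\cdot, r_1}$ and $W_{\cdot, r_2}$ to be linearly independent (they have different supports). Then $WT_\epsilon = WM$ restricted to columns $r_1, r_2$ gives a linear relation forcing the off-diagonal $-\epsilon$ in $T_\epsilon$ to be matched by a corresponding off-diagonal in $M$, which a monomial matrix cannot carry. In the degenerate case $\mathcal{M}_{r_1} = \mathcal{M}_{r_2}$, the columns may be proportional, but then the non-monomial rescaling that rebalances them already yields an alternative factorization (or one continues the shear argument and uses that the family $(WT_\epsilon, T_\epsilon^{-1}H)$ for $\epsilon$ in an interval cannot all equal $(WM_\epsilon, M_\epsilon^{-1}H)$ for monomial $M_\epsilon$, since permutations are discrete).

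The main obstacle is the last step: ruling out equivalence by monomial transformation in the absence of an \emph{a priori} rank hypothesis on $W$. My proposed fix is to note that if $W$ fails to have full column rank then non-uniqueness is already automatic (any nontrivial vector in $\ker W$ can be used to perturb $H$), so one may assume $W$ has rank $R$, in which case $WT_\epsilon = WM$ forces $T_\epsilon = M$ and the contradiction with $T_\epsilon$ having a negative off-diagonal is immediate. The nonnegativity bookkeeping in step two is routine but must be written carefully to justify the choice of $\epsilon$.
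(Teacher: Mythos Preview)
The paper does not supply its own proof of this theorem: it is quoted as a result from \cite{Huang} in the review section and stated without argument. So there is nothing in the paper to compare against directly.

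Your construction is exactly the standard one used in the literature (and, in particular, in Huang et al.): when $\mathcal{M}_{r_1}\subseteq\mathcal{M}_{r_2}$, the elementary shear $T_\epsilon=I-\epsilon e_{r_1}e_{r_2}^T$ produces a one-parameter family of nonnegative factorizations $(WT_\epsilon,\,T_\epsilon^{-1}H)$, and symmetry via $S^T=H^TW^T$ covers the $\mathcal{N}$ case. That part is clean and correct.

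The only place your write-up is loose is the ``non-monomial'' step in the rank-deficient case. Your claim that ``any nontrivial vector in $\ker W$ can be used to perturb $H$'' ignores the nonnegativity constraint on $H$: a kernel vector with mixed signs hitting a zero entry of $H$ can push it negative for every nonzero perturbation, so that sentence as written is not a proof. The clean fix is simply to note that Huang et al.\ state the result under the standing hypothesis $\rank(W)=\rank(H)=R$ (equivalently $\rank(S)=R$), which is also the regime the present paper calls Type~I; under that hypothesis $WT_\epsilon=WM$ forces $T_\epsilon=M$, and since $T_\epsilon$ has a strictly negative off-diagonal entry it is not monomial. With that assumption made explicit, your argument is complete and matches the original.
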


%%%%%%%%%%%%%%%%%%%%%%%%%%%%%%%%%%%%%%%%%%%
%%%%%%%%%%%%%%%%%%%%%%%%%%%%%%%%%%%%%%%%%%%

\section{Characterizing Non-identifiability in NMF}
At first glance, the results cited in Section \ref{sec:nmf} may appear to apply to classes of NMFs which satisfy ad-hoc and unrelated conditions. In the following, we describe a intuitive and general framework in which existing uniqueness results can be related.

\begin{thm}\label{thm:framework}
An exact NMF problem has non-unique solutions exactly when the column vectors of the matrix $S \in\R_+^{M\times N}$ lie in the intersection of two or more simplicial cones in $\R_+^M$, each with $R$ generators. This happens in just three ways:
\begin{enumerate}
\item[]\textbf{Type I} the column vectors of $S$ lies in the intersection of two (or more) simplicial cones whose generators spans the same $R$-dimensional subspace. That is,  $\rank_+(S) = \rank(S) = R$ and 
\begin{align}
S = WH = WQQ^{-1}H
\end{align}
where $Q\in \R^{R\times R}$ is a change of basis matrix.
\item[] \textbf{Type II} the column vectors of $S$ lies in the intersection of two (or more) simplicial cones who are each unique in the $R$-dimensional subspaces spanned by their generators. That is, $R = \rank_+(S) > \rank(S)$ and
\begin{align}
S = WH = QWH'
\end{align}
where $Q \in \R^{M\times M}$ can be completed into a change of basis matrix. Furthermore, if $S = WH = WPP^{-1}H$, then $P$ is a monomial matrix in $\R^{R\times R}$. 
\item[]\textbf{Type III} $S$ is both Type I and Type II. That is,
\begin{align}
S = WH = Q_1WQ_2Q_2^{-1}H'
\end{align}
where $Q_2\in \R^{R\times R}$, $Q_1 \in \R^{M\times M}$ are change of basis matrices.
\end{enumerate}
\end{thm}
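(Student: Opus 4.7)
The plan is to exploit the bijection between exact rank-$R$ factorizations $S=WH$ and simplicial cones $\cone(W) \subset \R_+^M$ with $R$ generators that contain every column of $S$, as noted in Section~\ref{sec:nmf}. Two factorizations are monomially equivalent precisely when they determine the same cone, since a monomial transformation only permutes and rescales the extreme rays of a simplicial cone; hence non-uniqueness of the NMF is equivalent to the existence of two or more distinct simplicial cones with $R$ generators in $\R_+^M$, each containing the columns of $S$. The proof then reduces to a clean geometric trichotomy on pairs of such cones.

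Given two distinct cones $\cone(W_1), \cone(W_2)$ containing the columns of $S$, the first step is to split on whether $\col(W_1) = \col(W_2)$. If they share a common $R$-dimensional span $V$, each $W_i$ has linearly independent columns spanning $V$, so $W_2 = W_1 Q$ for some invertible $Q \in \GLr$, and $H_2 = Q^{-1}H_1$ follows from $W_1H_1 = W_1 Q H_2$ together with the injectivity of $W_1$; non-equivalence of the factorizations forces $Q$ to be non-monomial, giving Type I. If instead $\col(W_1) \neq \col(W_2)$, then $\col(S) \subseteq \col(W_1) \cap \col(W_2) \subsetneq \col(W_1)$, yielding $\rank(S) < R = \rank_+(S)$; the $M\times M$ matrix of Type II is built by sending the $R$ linearly independent generators of $W_1$ to those of $W_2$ and extending arbitrarily to a basis of $\R^M$.

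The final step isolates the three regimes as a partition of non-unique instances: Type I occurs when every alternate cone shares the original column span (and no cone with a different span exists); Type II occurs when every alternate cone has a different span and the factorization is unique within each column span — this last condition is precisely the assertion that $S = WH = WPP^{-1}H$ forces $P$ to be monomial, which I would verify by appealing to the sufficient uniqueness criteria of Theorems~\ref{thm:sep} and~\ref{thm:spread} applied to $S$ viewed as living inside $\col(W)$; Type III is the remaining case, where cones of both kinds exist. I would also verify the assertion that $\rank(S) = R$ in pure Type I by contrapositive: if $\rank(S) < R$, I would construct a perturbation of a generator of $\cone(W_1)$ in a direction transverse to $V = \col(W_1)$ that preserves nonnegativity and cone containment of every column of $S$, producing an alternate cone with a different span and thereby forcing the instance into Type III.

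The main obstacle I anticipate is this perturbation argument: one must choose the transverse direction so that (i) the perturbed generator remains in $\R_+^M$, (ii) every column of $S$ still lies in the new cone, and (iii) the new column span genuinely differs from $V$. The delicate case is when columns of $S$ rest on the boundary facets of $\cone(W_1)$, where a naive perturbation can push them outside. The core technical step is to select a generator not supporting any boundary-lying column of $S$ — whose existence follows from a dimension count using $\col(S) \subsetneq V$ — and perturb only that one. Once this lemma is in hand, the rest of the argument is essentially bookkeeping on the geometric bijection and on the trichotomy of column spans established above.
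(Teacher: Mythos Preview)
The paper's own proof of this theorem is a single sentence: ``It is clear that Types I to III partitions all cases of non-identifiability in NMF problems.'' The authors treat the trichotomy as self-evident once the bijection between factorizations and simplicial cones is in place, and devote all their effort instead to Examples~\ref{ex:type1}--\ref{ex:type3}, which show that each type is non-vacuous. Your proposal is therefore considerably more ambitious than what the paper actually does; the column-span dichotomy and the $Q\in\GLr$ versus $Q\in\R^{M\times M}$ bookkeeping you outline are fine and match the spirit of the classification, but the paper supplies none of that detail.

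Your plan does contain one genuine misstep. You propose to verify the Type~II clause ``if $S = WH = WPP^{-1}H$ then $P$ is monomial'' by appealing to Theorems~\ref{thm:sep} and~\ref{thm:spread}. Those are \emph{sufficient} conditions for uniqueness, and there is no reason a generic $S$ falling under Type~II should satisfy separability or the sufficiently-spread hypothesis. In fact this clause is not something to be proved: it is part of the \emph{definition} of Type~II (the factorization is unique within each column span), and is precisely what separates Type~II from Type~III. The paper does not derive it; it uses it to carve out the case.

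Your perturbation argument --- showing that $\rank(S) < R$ forces the existence of an alternate cone with a different span, so that pure Type~I really requires $\rank(S)=R$ --- goes well beyond anything the paper establishes. The paper simply attaches the rank identities to each type as descriptive labels. The easy direction (if $\rank(S) = R$ then $\col(S) = \col(W)$ for every factor $W$, so all cones share a span) is immediate by dimension; the converse, which your perturbation is meant to supply, is neither proved in the paper nor needed for its classification. The boundary-facet obstacle you flag is real, and your dimension-count idea for selecting a generator to perturb is plausible, but it would require a careful argument that the paper nowhere attempts.
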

It follows immediately from Theorem \ref{thm:framework}, that the uniqueness results given by Laurberg et. al. and Huang et. al. \citep{Laurberg, Huang} apply to non-identifiability of Type I. We will prove in Section \ref{sec:sep} that the separable factorial articulation families defined by Donoho and Stodden \citep{Donoho} may be used to understand a subclass of models which are non-identifiable of Type II.

The proof of Theorem \ref{thm:framework} is straightforward. It is clear that Types I to III partitions all cases of non-identifiability in NMF problems. The harder task, to which we devote the remainder of this section, is to demonstrate that each class contains non-trivial examples. 

The following is an example of Type I non-identifiability from Laurberg et. al. \citep{Laurberg}.

\begin{ex}\label{ex:type1}
\emph{Non-identifiability of Type I.}  The column vectors of $W$, defined below, spans a 3-dimensional subspace in $\R^6$.
\begin{align}
H = \left(\begin{matrix}
0.5 & 1 & 1& 0.5 & 0 & 0\\
1 & 0.5 & 0 & 0 & 0.5 & 1\\
0 & 0 & 0.5 & 1 & 1 & 0.5
\end{matrix}\right), \;\;
W = H^T
\end{align}
Choose a change of basis matrix $Q$ defined by
\begin{align}
Q = Q^{-1} =\frac{1}{3} \left(\begin{matrix}
-1 & 2& 2\\
2 & -1 & 2\\
2 & 2 & -1
\end{matrix}\right).
\end{align}
Let $S = WH = (WQ)(Q^{-1}H)$. Then, the set of column vectors in $S$ lies in the intersection $\cone(W) \cap \cone(WQ)$. 
\vskip0.01cm
${}$ \hfill $\square$
\end{ex}

For Type II, generators of the intersecting cones containing $S$ correspond to bases for different $R$-dimensional subspaces in $\R^M$. Thus, two sets of such generators can be non-uniquely extended to bases of $\R^M$ and thus differ by a matrix, $Q \in \R^{M\times M}$.

\begin{ex} \label{ex:type2} \emph{Non-identifiability of Type II.}
\noindent Let
\begin{align}\scriptsize
S = \left( 
\begin{array}{rrrrrrrrr}
1 &0 &0 & 1 &0 &0 & 1 &0 &0\\
0 &1 & 0 & 0 &1 & 0 & 0 &1 & 0\\
0 &0 & 1 & 0 &0 & 1 & 0 &0 & 1\\
\hline
1 &1 & 1 & 0 &0 & 0 & 0 &0 & 0\\
0 &0 & 0 & 1 &1 & 1 & 0 &0 & 0\\
0 &0 & 0 & 0 &0 & 0 & 1 &1 & 1\\
\hline
\undermat{\text{Invariant row}}{1 & 1 &1 & 1& 1&1 & 1& 1&1}\\
\end{array}\right)
\end{align}
\vskip0.1cm
We can easily verify that $\rank(S) = 8$ and that $S$ lies in the span of its first eight column vectors. Let $\tilde{S}$ be the following matrix obtained from $S$ by removing the invariant row.
\begin{align}\scriptsize
\tilde{S} = \left( \begin{array}{rrrrrrrrr}
1 &0 &0 & 1 &0 &0 & 1 &0 &0\\
0 &1 & 0 & 0 &1 & 0 & 0 &1 & 0\\
0 &0 & 1 & 0 &0 & 1 & 0 &0 & 1\\
\hline
1 &1 & 1 & 0 &0 & 0 & 0 &0 & 0\\
0 &0 & 0 & 1 &1 & 1 & 0 &0 & 0\\
0 &0 & 0 & 0 &0 & 0 & 1 &1 & 1
\end{array}\right).
\end{align}
We show that $\tilde{S}$ is a separable factorial articulation family.
\begin{enumerate}
\item $\tilde{S}$ is non-negatively generated by a basis set, $\tilde{W}$, in two parts $p_1, p_2$ and three articulations $a_1, a_2, a_3$.
\begin{align}\tiny
\left\{
\underbrace{\left(
\begin{array}{c}
1 \\
0\\
0\\
\hline
0\\
0\\
0\\
\end{array}
\right)}_{p_1, a_1},
\underbrace{\left(
\begin{array}{c}
0 \\
1\\
0\\
\hline
0\\
0\\
0\\
\end{array}
\right)}_{p_1, a_2},
\underbrace{\left(
\begin{array}{c}
0 \\
0\\
1\\
\hline
0\\
0\\
0\\
\end{array}
\right)}_{p_1, a_3},%\right.\\\small\left.
\underbrace{\left(
\begin{array}{c}
0 \\
0\\
0\\
\hline
1\\
0\\
0\\
\end{array}
\right)}_{p_2, a_1},
\underbrace{\left(
\begin{array}{c}
0 \\
0\\
0\\
\hline
0\\
1\\
0\\
\end{array}
\right)}_{p_2, a_2},
\underbrace{\left(
\begin{array}{c}
0 \\
0\\
0\\
\hline
0\\
0\\
1\\
\end{array}
\right)}_{p_2, a_3}
\right\}
\end{align}
We have that $\tilde{S} = \tilde{W}\tilde{H}$, where $\tilde{H} = \tilde{S}$.
\item For each column of $\tilde{S}$, the nonzero entry in each row indicates the presence of the corresponding generator. That is, $\tilde{W}$ is separable.
\item $\tilde{S}$ contains all possible combinations of articulations and parts. That is, $\tilde{H}$ is a complete factorial sampling.
\end{enumerate}
Thus, by Theorem \ref{thm:sep}, we have that $\tilde{S}$ is contained in an unique simplicial cone, $\cone(\tilde{W})$. 
\vskip0.1cm
\noindent Now, various ways of distributing the invariant row in $S$ to the two parts of $\tilde{W}$ will result in different factorizations of $S$:
\begin{align}
S &= \tiny\underbrace{\left( \begin{matrix}
1 &0 & 0 & 0 & 0 & 0\\
0 &1 & 0 & 0 & 0 & 0\\
0 &0 & 1 & 0 & 0 & 0\\
\hline
0 &0 & 0 & 1 & 0 & 0\\
0 &0 & 0 & 0 & 1 & 0\\
0 &0 & 0 & 0 & 0 & 1\\
\hline
1 & 1 & 1 & 0 & 0 & 0\\
\end{matrix}\right)}_{W_1}
H = \tiny\underbrace{\left( \begin{matrix}
1 &0 & 0 & 0 & 0 & 0\\
0 &1 & 0 & 0 & 0 & 0\\
0 &0 & 1 & 0 & 0 & 0\\
\hline
0 &0 & 0 & 1 & 0 & 0\\
0 &0 & 0 & 0 & 1 & 0\\
0 &0 & 0 & 0 & 0 & 1\\
\hline
0 & 0 & 0 & 1 & 1 & 1\\
\end{matrix}\right)}_{W_2}
H
\end{align}
where $H = \tilde{S}$. 
Now note that the generators in the above factorizations of $S$ are related by a matrix $Q$ in $\mathbb{R}^{7\times 7}$ which can be extended to a change of basis matrix on $\R^7$:
\begin{align} W_1 = &
\underbrace{\scriptsize\left( \begin{matrix}
1 &0 & 0 & 0 & 0 & 0 & 0\\
0 &1 & 0 & 0 &0 & 0 & 0\\
0 &0 & 1 & 0 &0 & 0 & 0\\
0 &0 & 0 & 0 &-1 & -1 & 1\\
0 &0 & 0 & 0 &1 & 0 & 0\\
0 &0 & 0 & 0 &0 & 1 & 0\\
1 & 1 & 1 &0 & 0 & 0 & 0\\
\end{matrix}\right)}_{Q} W_2
\end{align}
It this case, it is clear that there does not exist an invertible $Q\in \R^{6\times 6}$ such that $W_1Q = W_2$. Since the column-echelon forms of $W_1$ and $W_2$ are distinct, they span different subspace of $\R^{M}$. 

Also of interest is the fact that we can completely describe the class of solutions to this NMF problem. Namely, every basis factor matrix $W$ for $S$ can be expressed as:
\begin{align}\scriptsize
\left( \begin{matrix}
1 &0 & 0 & 0 & 0 & 0\\
0 &1 & 0 & 0 & 0 & 0\\
0 &0 & 1 & 0 & 0 & 0\\
0 &0 & 0 & 1 & 0 & 0\\
0 &0 & 0 & 0 & 1 & 0\\
0 &0 & 0 & 0 & 0 & 1\\
1-t & 1-t & 1-t & t & t & t\\
\end{matrix}\right)
\end{align}
for $t\in \R_+$. In Section \ref{sec:sep}, we give a characterization of when such ``closed-form" solutions exist for a general class of NMFs.
${}$ \hfill $\square$
\end{ex}

\begin{remark}
In Example \ref{ex:type2}, we chose to append an invariant row to a separable factorial articulation family whose unique factorization is known. In practice, such a row would have been removed from the data set in preprocessing. In Section \ref{sec:sep}, we describe how 
\end{remark}

In Example \ref{ex:type2}, we notice that $S$ is contained in a proper subspace of $\col(W_1)$, as well as $\col(W_2)$. Thus, we must have that $\rank(S)$ is less than $R = \dim(\col(W_i))$. However, note that the nonnegative rank of $S$ may still be $R$. In fact, this is precisely what we will prove in Section \ref{sec:sep}, where we will make precise and generalize the intuition behind this example. In literature, it is typical to make the assumption that $\rank(M) = \rank_+(M) = R$ in order to eliminate non-identifiability of Type II.

For Type III, we take a combination of approaches in Examples (\ref{ex:type1}) and (\ref{ex:type2}) to engineer matrices with non-unique factorizations.

\begin{ex}\label{ex:type3} Non-identifiability of Type III.
Let
\begin{align}
M =\small \left( 
\begin{array}{rrrr}
2 &1&2 & 1\\
2 &3 & 2 & 3\\
1 &1 & 0 & 0\\
0 &0 & 1 & 1\\
\hline
\undermat{\text{Invariant row}}{1 & 1 &1 & 1}\\
\end{array}\right).
\end{align}
\vskip0.15cm
As expected, we obtain different factorizations by first factoring the matrix after removing the invariant row, and then distributing the invariant row amongst column vectors in the basis matrix
\begin{align}
M &= \small\underbrace{\left( \begin{matrix}
2 &1 & 0 & 0\\
2 &3 & 0 & 0\\
0 &0 & 1 & 0\\
0 &0 & 0 & 1\\
\hline
1 & 1 & 0 & 0\\
\end{matrix}\right)}_{W_1}
\underbrace{\left( \begin{matrix}
1 &0 & 1 & 0\\
0 &1 & 0 & 1\\
1 &1 & 0 & 0\\
0 &0 & 1 & 1\\
\end{matrix}\right)}_{H_1} \\
&=\small \underbrace{\left( \begin{matrix}
2 &1 & 0 & 0\\
2 &3 & 0 & 0\\
0 &0 & 1 & 0\\
0 &0 & 0 & 1\\
\hline
0 & 0 & 1 & 1\\
\end{matrix}\right)}_{W_2}
\underbrace{\left( \begin{matrix}
1 &0 & 1 & 0\\
0 &1 & 0 & 1\\
1 &1 & 0 & 0\\
0 &0 & 1 & 1\\
\end{matrix}\right)}_{H_2}
\end{align}
Here again, we have $W_1 = QW_2$, where 
\begin{align}
Q =\small \left( \begin{matrix}
1 &0 & 0 & 0\\
0 &1 & 0 & 0\\
0 &0 & -1 & 1\\
0 &0 & 1 & 0\\
0.25 &0.25 & 0 & 0\\
\end{matrix}\right).
\end{align}
Column reducing $W_1$ and $W_2$ into echelon form will show that $\col(W_1)\neq \col(W_2)$. Furthermore, once we fix the distribution of the invariant row represented by $W_2$, we obtain non-unique factorizations in the column space of $W_2$:
\begin{align}
M &= \small\underbrace{\left( \begin{matrix}
2 &1 & 0 & 0\\
2 &3 & 0 & 0\\
0 &0 & 1 & 0\\
0 &0 & 0 & 1\\
\hline
0 & 0 & 1 & 1\\
\end{matrix}\right)}_{W_2} 
H_2= \underbrace{\left( \begin{matrix}
1 &0 & 1 & 1\\
0 &1 & 2 & 2\\
0 &0 & 1 & 0\\
0 &0 & 0 & 1\\
\hline
0 & 0 & 1 & 1\\
\end{matrix}\right)}_{W'_2}
H_2
\end{align}
We have that $W'_2 = W_2Q$ and $H_2 = Q^{-1}H_2$, where
\begin{align}
Q = \small\left( \begin{matrix}
0.75 &-0.25 & 0.25 & 0.25\\
-0.5 &0.5 & 0.5 & 0.5\\
0 &0 & 1 & 0\\
0 &0 & 0 & 1\\
\end{matrix}\right)
\end{align}

${}$ \hfill $\square$
\end{ex}

%%%%%%%%%%%%%%%%%%%%%%%%%%
%%%%%%%%%%%%%%%%%%%%%%%%%%
%%%%%%%%%%%%%%%%%%%%%%%%%%
%%%%%%%%%%%%%%%%%%%%%%%%%%

\section{Characterizing Type I Non-identifiability}\label{sec:decomp}
In this section, we describe a specialized class of non-identifiable NMF models, $S = WH$, whose multiple factorizations lie in the same subspace of $\R^M$. For this, we will assume that $\rank(S) = \rank_+(S) = R$. Since necessary and sufficient conditions for the uniqueness of full-rank NMFs appear already in literature \citep{Laurberg, Huang}, we will instead focus on describing cases where the non-identifiability of an NMF model can be reduced to the non-identifiability of a sub-model. The goal of this study is to provide the theoretical foundation for characterizing the non-identifiability of NMF models by characterizing their sub-models.

Throughout this section, we will assume that $\col(W) \subset \Span\{g_i\}_{i=1}^T$, for some $T\geq R$, where $G = \{g_i\}_{i=1}^T$ is a linearly independent set of non-negative vectors in $\R^M$. That is, we assume that $S$ lies in the simplicial cone generated by the column vectors in $W$, with the latter a sub-cone of the simplicial cone generated by $G$. If $S$ is non-identifiable of Type I, then there exist $W' \in \R^{m\times r}$ whose column vectors generated a sub-cone of $\cone(G)$ that contains $S$. In short, we restrict the study of the Type I non-identifiability of $S = WH$ to the case where $S$ is contained in the intersection of multiple rank-$R$ sub-cones of a rank-$T$ nonnegative simplicial cone. 

The focus of factorizations of $S$ within a fixed super-cone is restrictive in appearance only; note that when $G = I_{M\times M}$, the set of sub-cones of $\cone(G)$ is indeed the set of rank-$R$ NMFs of $S$. In practice, working with $G\neq I_{M\times M}$ may be a reasonable approach to characterizing non-identifiability. We will see in later examples that, given a rank-$R$ matrix $S$, first computing a rank-$T$ factorization $S = GW_G$ with desirable qualities can simplify the task of characterizing the set of rank-$R$ factorizations contained in $\cone(G)$. 

%%%%%%%%%%%%%%%%%%%%%%%%%%
%\subsection{Sub-cones and Sub-models}
For $B\in \R^{M\times R}$, it is clear that $\cone(B)\subset \cone(G)$ if and only if $B = GW$, where $W \in \R^{T\times R}$ is nonnegative. Thus, if $S$ is contained in two sub-cones of $\cone(G)$, then $S = GW_1H_ 1 = GW_2H_1$, where $W_i \in  \R^{T\times R}$ and $H_i \in  \R^{R\times N}$ are nonnegative. 

\begin{defn}\label{def:block}
We call $S = GWH$ a \emph{block-diagonal model} if there exist a permutation matrix $P\in \R^{R\times R}$ such that $WPPH$ is block diagonal, where each block cannot be further block-diagonalized by permutation matrices. 
\end{defn}

Note that for any permutation $P\in \R^{R\times R}$, the product $GWPPH$ is a 
a permuted version of $S$, that is, $ GWPPH = SP$, which we shall not distinguish from $S$. 

Note also that every NMF model can be realized as a block-diagonal model; in the case that $WH$ can only be block-diagonalized with one block, we call the model \emph{indecomposable}. A block-diagonal model can be equivalently defined by the individual block-diagonal structures of $W$ and $H$. 

\begin{prop}\label{prop:block}
$S = GWH$ is a block-diagonal model if and only if there is a permutation matrix $P$ such that 
\begin{enumerate}
\item $WP$ and $PH$ are block diagonal. We will denote the $k$-th block of $WP$ and of $PH$ by $\square^{W}_k$, $\square^{H}_k$, respectively. 
\item The blocks of $WP$ and $PH$ are not further block-diagonalizable by permutation matrices.
\item For each block $\square^{H}_k$, spanning rows $J = \{j_1,\ldots, j_{\alpha_k}\}$, one of the following attains:
\begin{enumerate}
\item for each block $\square^{W}_{k'}$, the columns spanned by $\square^{W}_{k'}$ is either contained in $J$ or is disjoint from it;
\item there is some block $\square^{W}_{k'}$, spanning columns $J' = \{j'_1, \ldots, j'_{\alpha'_{k'}}\}$, such that $J \subset J'$ and that, for each block $\square^{H}$, the rows spanned by $\square^{H}$ is either contained in $J'$ or is disjoint from it. 
\end{enumerate}
\end{enumerate}
\end{prop}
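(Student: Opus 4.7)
My plan is to prove both directions by analyzing how the supports of $W$ and $H$ combine in the product $WH$. The outer $G$ plays a passive role: since $G$ acts only on the row space of $W$ and does not affect the index set $\{1,\ldots,R\}$ that governs block structure, I work throughout with the product $WH$.

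For the forward direction, suppose $S = GWH$ is a block-diagonal model, so after some permutation $P$ the product $WH$ is block diagonal with blocks indexed by $l = 1, \ldots, L$. Nonnegativity is the key tool: for each $r \in \{1, \ldots, R\}$, the outer product $W_{\cdot, r} H_{r, \cdot}$ has support $\mathrm{supp}(W_{\cdot, r}) \times \mathrm{supp}(H_{r, \cdot})$, and because no cancellation can occur in $WH = \sum_r W_{\cdot, r} H_{r, \cdot}$, this rectangle must lie inside a single block of $WH$. This assigns each $r$ to a unique block, inducing a coarse partition $\{G_1, \ldots, G_L\}$ of $\{1, \ldots, R\}$. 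Within each $G_l$ I extract the finest block decompositions of $W$ restricted to columns in $G_l$ and of $H$ restricted to rows in $G_l$, yielding the $\square^W_{k'}$ and $\square^H_k$ of conditions 1 and 2. For condition 3, I fix $\square^H_k$ with row-set $J \subseteq G_l$ and analyze $J$'s overlaps with the $C^W$-blocks in $G_l$: either every such $C^W_{k'}$ is contained in or disjoint from $J$ (case (a)), or some $C^W_{k'}$ properly contains $J$ and itself is a union of $R^H$-blocks (case (b)). The partial-overlap alternative is excluded because it would produce nonzero entries of $WH$ outside any valid block.

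For the backward direction, I construct the super-partition $\{G_1, \ldots, G_L\}$ of $\{1, \ldots, R\}$ by transitive closure of the relation ``lie in a common $C^W$-block or a common $R^H$-block''. Condition 3 guarantees each $G_l$ is simultaneously a union of entire $C^W$-blocks and entire $R^H$-blocks. I then verify that $WH$ is block diagonal with $l$-th block occupying rows $\bigcup_{C^W_{k'} \subseteq G_l} I^W_{k'}$ and columns $\bigcup_{R^H_k \subseteq G_l} D^H_k$: every nonzero entry $(WH)_{ij}$ arises from some $r \in G_l$ whose $W$-column support lies in the prescribed row set and whose $H$-row support lies in the prescribed column set, and condition 2 rules out further refinement.

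The main obstacle is the forward direction's case analysis for condition 3. I expect to need a structural lemma: within each $G_l$, at most one of the restricted partitions $\{C^W_{k'} \subseteq G_l\}$ and $\{R^H_k \subseteq G_l\}$ can have more than one part, so either $G_l$ is a single $C^W$-block subdivided by several $R^H$-blocks (yielding case (b) for the $R^H$-blocks in $G_l$) or $G_l$ is a single $R^H$-block subdivided by several $C^W$-blocks (yielding case (a)). This asymmetry follows from the block-diagonality of $WH$: if both partitions had multiple parts inside $G_l$, the indices in $G_l$ would link the putative sub-blocks of $WH$ into a single block that neither partition alone could decompose. Formalizing this via the bipartite support graph on $\{1, \ldots, R\}$, whose connected components are precisely the $G_l$, is the key technical step.
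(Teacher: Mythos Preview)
The paper's own proof is the single sentence ``The proof follows straightforwardly from the definition of block-diagonal models.'' No argument is supplied. Your proposal is therefore incomparably more detailed than what the paper provides, and your overall strategy---using nonnegativity to confine each rank-one summand $W_{\cdot,r}H_{r,\cdot}$ to a single block of $WH$, then reading off compatible block structures on $W$ and $H$---is the natural way to unpack the definition.

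That said, there is a genuine gap in your forward direction. Your structural lemma asserts that inside each $G_l$ at most one of the two finest partitions (the $C^W$-partition of columns of $W$ and the $R^H$-partition of rows of $H$) can have more than one part. This dichotomy is false. Take
\[
W=\begin{pmatrix}1&1&0&0\\0&0&1&1\end{pmatrix},\qquad
H=\begin{pmatrix}1&0\\0&1\\1&0\\0&1\end{pmatrix}.
\]
Then $WH=\bigl(\begin{smallmatrix}1&1\\1&1\end{smallmatrix}\bigr)$ is a single indecomposable block, so $G_1=\{1,2,3,4\}$; yet the finest $W$-column partition is $\{1,2\},\{3,4\}$ and the finest $H$-row partition is $\{1,3\},\{2,4\}$, each with two parts. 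For this pair no single permutation $P$ makes both $WP$ and $PH$ block diagonal with indecomposable blocks, and neither alternative in condition~3 holds for the $H$-block $\{1,3\}$ (it is neither a union of $W$-blocks nor contained in a $W$-block that is a union of $H$-blocks). So your lemma cannot be proved, because the forward implication it is meant to secure fails on this example. The defect lies less in your plan than in the proposition itself as literally stated; the paper's one-line ``proof'' simply does not confront the possibility of crossed finest partitions. Any correct version would have to either relax conditions~1--2 to allow distinct permutations for $W$ and $H$, or add a hypothesis excluding configurations like the one above.
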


\begin{proof}
The proof follows straightforwardly from the definition of block-diagonal models.
\end{proof}

Block-diagonal models have the particularly important property that each such model admits a direct sum decomposition. 

\begin{cor}
Let $S = GWH$ be a block-diagonal model where $K$ is the number of blocks in $WH$. Then $S = \bigoplus_{k = 1}^K {}_kS$, where ${}_kS\in \R^{\alpha_k\times \delta_k}$ such that $\sum_{k=1}^K \alpha_k = M$ and $\sum_{k=1}^K \delta_k = N$.
\end{cor}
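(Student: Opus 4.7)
The plan is to leverage the block-diagonal structure of $WH$ (provided by Definition~\ref{def:block} and the compatibility condition in Proposition~\ref{prop:block}) to induce compatible row and column partitions of $S = GWH$, and then identify the resulting submatrices as the direct summands ${}_k S$.

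First I would unpack the block structure. Let $P$ be the permutation guaranteed by Definition~\ref{def:block}, and write the $K$ blocks of $WP$ as $\square^W_k$ with row-supports $I_k \subseteq \{1,\ldots,T\}$ and the $K$ blocks of $PH$ as $\square^H_k$ with column-supports $J_k \subseteq \{1,\ldots,N\}$. By condition (3) of Proposition~\ref{prop:block}, $WH$ is itself block diagonal with its $K$ nonzero blocks sitting on $I_k \times J_k$. This partitions the columns of $S = GWH$ into $K$ groups of sizes $\delta_k = |J_k|$ summing to $N$: indeed, for any $n \in J_k$, the column $S_{\cdot, n} = \sum_{i \in I_k} g_i \, (WH)_{i,n}$ is supported on $M_k := \bigcup_{i \in I_k} \mathrm{supp}(g_i) \subseteq \{1,\ldots,M\}$.

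Setting $\alpha_k = |M_k|$ and ${}_k S := S_{M_k, J_k}$ produces candidate direct summands of the required shape. The remaining (and principal) obstacle is to show that the $M_k$'s are pairwise disjoint, so that the row partition of $S$ also has exactly $K$ parts. I would argue this by contradiction using the indecomposability clause of Definition~\ref{def:block}: if $m \in M_k \cap M_{k'}$ with $k \neq k'$, then some row of $G$ is simultaneously active on a generator of $\square^W_k$ and of $\square^W_{k'}$; combining this with the nonzero-entry structure of $WP$ and $PH$ forces a refinement of the $\square^W$ or $\square^H$ block decomposition that strictly increases the block count, contradicting the assumption that it is already maximal.

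Once disjointness is in hand, $S = \bigoplus_{k=1}^K {}_k S$ follows immediately: the off-diagonal entries $S_{m,n}$ with $m \in M_k$ and $n \in J_{k'}$ for $k \neq k'$ vanish by the column-support statement above, and $\sum_k \alpha_k = M$, $\sum_k \delta_k = N$ are immediate from the two partitions. The core technical difficulty sits entirely in the disjointness argument; the rest of the corollary is bookkeeping.
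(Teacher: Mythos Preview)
The step you flag as the principal obstacle is a genuine gap: the row-supports $M_k := \bigcup_{i \in I_k} \mathrm{supp}(g_i)$ are \emph{not} pairwise disjoint in general, and your contradiction argument cannot establish that they are. The indecomposability clause in Definition~\ref{def:block} constrains only the zero-pattern of $WP$ and $PH$; the matrix $G$ is entirely external to that definition, so a single row of $G$ may be supported on columns lying in distinct blocks $I_k$, $I_{k'}$ without creating any linkage inside $WH$. Nothing about this forces a further refinement of the blocks of $W$ or $H$. The paper's own worked example immediately after the corollary makes this explicit: there every entry of $G$ is strictly positive, so $M_1 = M_2 = \{1,2,3,4\}$, and the paper remarks that the resulting $S$ ``is not block-diagonalizable by a permutation matrix.''

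Accordingly, the paper's proof does not attempt to partition the $M$ rows of $S$. It simply sets
\[
{}_kS \;=\; G^{[i^k_1: i^k_{\alpha_k}]}\,W^{[j^k_1: j^k_{\beta_k}]}_{[i^k_1: i^k_{\alpha_k}]}\,H_{[j^k_1: j^k_{\beta_k}]}^{[l^k_1: l^k_{\delta_k}]},
\]
where $G^{[i^k_1: i^k_{\alpha_k}]}$ keeps \emph{all} $M$ rows of $G$ and selects only those columns of $G$ matching the $k$-th row-block of $W$. The symbol $\bigoplus$ is therefore a decomposition into independent \emph{sub-models} (each ${}_kS$ is its own NMF instance built from a disjoint subset of the generators $g_i$), not a literal block-diagonalization of the matrix $S$; the indices $\alpha_k$ in the proof count selected columns of $G$ (equivalently rows of $W$) and sum to $T$, so the ``$\sum_k \alpha_k = M$'' in the statement should be read with that in mind. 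Under this reading the corollary is immediate from Proposition~\ref{prop:block}, and no row-disjointness argument is needed or available.
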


\begin{proof}
From Proposition \ref{prop:block}, we have that each block $\square^{WH}_k$ in $WH$ is the product of a pair of sub-matrices $W^{[j^k_1: j^k_{\beta_k}]}_{[i^k_1: i^k_{\alpha_k}]}$, $H_{[j^k_1: j^k_{\beta_k}]}^{[l^k_1: l^k_{\delta_k}]}$ of $W$, $H$, respectively.  
Thus, we obtain a direct sum decomposition of $S = GWH$ along blocks of $WH$, i.e. $\bigoplus_{k = 1}^K {}_kS = \bigoplus_{k = 1}^K G^{[i^k_1: i^k_{\alpha_k}]}W^{[j^k_1: j^k_{\beta_k}]}_{[i^k_1: i^k_{\alpha_k}]}H_{[j^k_1: j^k_{\beta_k}]}^{[l^k_1: l^k_{\delta_k}]}$.
\end{proof}

Intuitively, if $S$ is a block-diagonal model then it is the sum of independent NMF sub-models. Thus, the non-identifiability of $S$ can be traced to the non-identifiability of a sub-model of $S$. 

\begin{thm}
Let $S = GWH$ be a block-diagonal model in block diagonal form, where $HW$ has $K$ number of blocks. Let $S = \bigoplus_{k = 1}^K {}_kS$ be the corresponding decomposition of $S$ into sub-models. Then $S = GWH$ is non-identifiable if and only if some sub-model ${}_{k}S$ is non-identifiable. 
\end{thm}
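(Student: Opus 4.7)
The plan is to prove both implications. For the $(\Leftarrow)$ direction, suppose some sub-model ${}_{k_0}S$ admits an alternate factorization not related to $(W^{(k_0)}, H^{(k_0)})$ by a monomial matrix. Substitute this alternate in place of the $k_0$-th factor in the direct-sum assembly $S = \bigoplus_k {}_kS$, while leaving all other blocks of $(W, H)$ unchanged; the resulting pair $(W', H')$ still factors $S$ through $G$. Any monomial matrix conjugating $(W, H)$ to $(W', H')$ would, because the block columns of $W, W'$ and block rows of $H, H'$ have pairwise disjoint supports, restrict to a monomial on the coordinates attached to the $k_0$-th block, contradicting the hypothesis on ${}_{k_0}S$.

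For the $(\Rightarrow)$ direction, permute $S$ into literal block-diagonal form with diagonal blocks indexed by $b = 1,\ldots,K$ and zero off-diagonal positions. Given any alternate factorization $S = GW'H'$, each off-block position $(i,j)$ satisfies $0 = S_{ij} = \sum_{r}(GW')_{ir}H'_{rj}$, and nonnegativity forces $(GW')_{ir}H'_{rj} = 0$ for every $r$. This combinatorial constraint rules out column $r$ of $GW'$ and row $r$ of $H'$ having supports that straddle distinct diagonal blocks simultaneously. Every generator index $r$ can therefore be assigned a unique block $\beta(r)$, partitioning $\{1,\ldots,R\}$ into groups $K_b = \beta^{-1}(b)$, and restricting $W'$ and $H'$ to $K_b$ yields a factorization $(W'^{(b)}, H'^{(b)})$ of ${}_bS$.

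Assume, toward contradiction, that no sub-model is non-identifiable. If the alternate partition sizes $|K_b|$ agree with the original block sizes $|J_b|$ coming from $(W, H)$, then identifiability of each ${}_bS$ supplies monomial matrices $M_b$ linking $(W^{(b)}, H^{(b)})$ to $(W'^{(b)}, H'^{(b)})$; the block-diagonal assembly of the $M_b$'s, composed with the permutation aligning the partitions $\{J_b\}$ and $\{K_b\}$, is a monomial matrix relating $(W, H)$ and $(W', H')$, contradicting non-identifiability of $S$. The main obstacle is the size-mismatch case. Since $\sum_b |K_b| = R = \sum_b |J_b|$, any mismatch forces $|K_b| < |J_b|$ for some $b$; padding $(W'^{(b)}, H'^{(b)})$ with zero columns and rows produces a rank-$|J_b|$ factorization of ${}_bS$ that must be distinct from $(W^{(b)}, H^{(b)})$ (a monomial equivalence would force $|K_b| = |J_b|$), and this exhibits non-identifiability of ${}_bS$ at its native rank, a contradiction. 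The delicate step is carrying out this rank-reconciliation cleanly and verifying that the padded factorization is genuinely inequivalent to the original, which is where one must lean on how ``non-identifiable'' is interpreted for sub-models whose nonnegative rank is fixed by the block decomposition.
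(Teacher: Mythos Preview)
Your $(\Leftarrow)$ direction is the paper's: substitute the alternate factorization on block $k_0$ and leave the remaining blocks unchanged. For $(\Rightarrow)$ you take a different route. The paper's argument is short: it invokes the standing assumption that $G$ has full column rank to cancel $G$, obtaining $WH=\tilde W\tilde H$; since this product is block diagonal with exactly the same blocks as $WH$, the paper simply reads off factorizations of each ${}_kS$ from the restrictions of $\tilde W,\tilde H$ to the \emph{same} index sets used for $W,H$, and concludes that some block-level factorization must differ. You never cancel $G$: instead you exploit nonnegativity of $GW'$ and $H'$ together with the vanishing off-block entries of $S$ to force each latent index $r$ into a single block, and you then confront the possibility that the resulting partition $\{K_b\}$ has different sizes from $\{J_b\}$ via a padding argument. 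Your route is longer but more honest on a point the paper glosses over, namely why the restricted sub-matrices of $\tilde W,\tilde H$ actually factor the $k$-th block---this is precisely the block-compatibility statement your support analysis supplies. Conversely, adopting the paper's first move would streamline your proof: after cancelling $G$ you can run the same support-tracking directly on the identity $WH=\tilde W\tilde H$ rather than through $GW'$, and the size-mismatch case you flag as delicate then resolves cleanly, since any block receiving strictly fewer than $|J_b|$ active latent indices already exhibits a lower-rank nonnegative factorization of ${}_bS$.
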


\begin{proof}
Clearly, if $S$ has an unidentifiable submodel, say ${}_kS = G^{[i^k_1: i^k_{\alpha_k}]}W^{[j^k_1: j^k_{\beta_k}]}_{[i^k_1: i^k_{\alpha_k}]}H_{[j^k_1: j^k_{\beta_k}]}^{[l^k_1: l^k_{\delta_k}]}= G^{[i^k_1: i^k_{\beta_k}]} W^\square H^\square$, for some $k$, then the latter factorization $G^{[i^k_1: i^k_{\beta_k}]} W^\square H^\square$ give rise to a distinct factorization $S = GW'H'$, where $W'$, $H'$ are the matrices $W$ and $H$ with the appropriate blocks replaced by $W^\square$ and $H^\square$. 

On the other hand, suppose $S = GWH = G\tilde{W}\tilde{H}$. Since $G$ is full-rank, we know that $WH = \tilde{W}\tilde{H}$, in particular, $\tilde{W}\tilde{H}$ is block diagonal with the same block structure as $\tilde{W}\tilde{H}$. Thus, we may write $S = \bigoplus_{k = 1}^K {}_k\tilde{S}$, for 
\[{}_k\tilde{S} = G^{[i^k_1: i^k_{\beta_k}]} \square^{\tilde{W}\tilde{H}}_k = G^{[{i}^k_1: {i}^k_{\alpha_k}]}\tilde{W}^{[{j}^k_1: {j}^k_{{\beta}_k}]}_{[{i}^k_1: {i}^k_{{\alpha}_k}]}\tilde{H}_{[{j}^k_1: {j}^k_{{\beta}_k}]}^{[{l}^k_1: {l}^k_{{\delta}_k}]}. \] 
Since $\tilde{W} \neq W$ and or $\tilde{H} \neq H$, we must have ${}_k{S}\neq {}_k\tilde{S}$ for some $k$. 
\end{proof}

Note that we may have a block-diagonal model $S = GWH$, for $G\neq I_{M\times M}$ where $S$ is not block-diagonalizable by permutation matrices. That is, decomposing $S$ into a direct sum of sub-models is not as trivial as block-diagonalizing $S$. In this sense, the initial choice of $G$ is crucial. 

\begin{ex}
Set the generators of an initial nonnegative simplicial cone to be the column vectors of the following matrix
\begin{align}
G &= \left(
\begin{matrix}
1 & 2 & 3 & 2\\
1 & 3 & 2 & 3\\
2 & 1 & 2 & 2\\
2 & 1 & 2 & 3
\end{matrix}
\right).
\end{align}
Now, choose
\begin{align}
W &=
\left(
\begin{array}{|cc|c@{}c|} 
\cline{1-2}
1 & 2 & & \multicolumn{1}{c}{0} \\
2 & 1 & & \multicolumn{1}{c}{0} \\
1 & \multicolumn{1}{c|}{1} & & \multicolumn{1}{c}{0} \\
\cline{1-2}\cline{4-4}
\multicolumn{1}{c}{0} & \multicolumn{1}{c}{0} & & \multicolumn{1}{|c|}{2} \\
\cline{4-4}
\end{array}
\right),\;\;
H = \left(
\begin{array}{|cc|c@{}c|} 
\cline{1-2}
1 & 2 & & \multicolumn{1}{c}{0} \\
2 & \multicolumn{1}{c|}{1} & & \multicolumn{1}{c}{0} \\
\cline{1-2}\cline{4-4}
\multicolumn{1}{c}{0} & \multicolumn{1}{c}{0} & & \multicolumn{1}{|c|}{1} \\
\cline{4-4}
\end{array}
\right).
\end{align}
So we have that 
\begin{align}
S = GWH =
\left(
\begin{matrix}
    22 &   23   &  4\\
    23 &   25     &6\\
    20 &   19     &4\\
    20  &  19     &6
\end{matrix}
\right),
\end{align}
which decomposes into sub-models according to the following fashion
\begin{align}
S &= {}_1S \oplus {}_2S \\
&= \left(
\begin{matrix}
1 & 2 & 3 \\
1 & 3 & 2 \\
2 & 1 & 2 \\
2 & 1 & 2 
\end{matrix}
\right)
\left(
\begin{matrix}
1 & 2 \\
2 & 1\\
1 & 1
\end{matrix}
\right)
\left(
\begin{matrix}
1 & 2 \\
2 & 1
\end{matrix}
\right)
\oplus \left(
\begin{matrix}
1  \\
1\\
2\\
2
\end{matrix}
\right) (2) (1)
\end{align}
On the other hand, clearly, $S$ is not block-diagonalizable by a permutation matrix. 
\vskip0.01cm
${}$ \hfill $\square$
\end{ex}

%%%%%%%%%%%%%%%%%%%%%%%%%%
%%%%%%%%%%%%%%%%%%%%%%%%%%
%%%%%%%%%%%%%%%%%%%%%%%%%%
%%%%%%%%%%%%%%%%%%%%%%%%%%

\section{Characterizing Type II Non-identifiability}\label{sec:sep}

Given a matrix $S\in \R^{M\times N}$ and $R < M$, if $\rank(S) = \rank_+(S)<R$, say $\rank(S) = R-1$, then $S$ trivially represents a non-identifiable rank-$R$ NMF model of Type II. That is, any rank $R-1$ factorization $S = WH$ can be non-uniquely extended by choosing $\bv \in \R^M$ that is linearly independent from the column vectors of $W$; thus, $S = (W \vert \bv) \left(\begin{matrix}H\\ \zero\end{matrix}\right)$. It follows that nontrivial examples of Type II non-identifiability arises only in cases where the nonnegative rank of $S$ is strictly greater than the rank of $S$. In this section we will study the non-identifiability of a class of examples generated from separable factorial articulation families that satisfies $\rank_+(S) > \rank(S)$. In fact, we give a complete characterize for when for separable factorial articulation families exhibit non-identifiability of Type II.

In the following, let $S = WH$ be a separable factorial articulation family with $P$ parts and $A$ articulations, where $H$ is a binary matrix. We also fix $R = PA$.

\begin{prop}
$R = \rank_+(S) > \rank(S)$.
\end{prop}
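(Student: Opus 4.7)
The proposition splits into two claims: $\rank_+(S) = R$ and the strict inequality $\rank(S) < R$. Since $S = WH$ is a nonnegative factorization of inner dimension $R$, the upper bound $\rank_+(S) \leq R$ is immediate, and what remains is to prove (i) $\rank(S) < R$ and (ii) $\rank_+(S) \geq R$.

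\textbf{(i) Rank drop.} Separability (R3) forces the submatrix of $W$ obtained by selecting rows $m(p, a)$ to be an $R \times R$ positive diagonal matrix, so $W$ has full column rank $R$ and therefore $\rank(S) = \rank(WH) = \rank(H)$. Because $H$ is binary and, by R1 combined with R2, every column of $H$ has exactly one nonzero in each ``part block'' $\{r(p, 1), \ldots, r(p, A)\}$, the $A$ rows within a fixed part block are the indicator vectors of a partition of $\{1, \ldots, N\}$, giving
\begin{align}
\sum_{a = 1}^{A} H_{r(p, a), \cdot} = \mathbf{1}^T \in \R^N \quad \text{for every } 1 \leq p \leq P.
\end{align}
These $P$ identities supply $P - 1$ independent linear relations among the rows of $H$, so $\rank(H) \leq R - (P - 1) < R$ whenever $P \geq 2$ (the only case in which non-trivial non-identifiability is in play).

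\textbf{(ii) Nonnegative rank lower bound.} I would first reduce to $H$ itself. Let $\bar{S}$ be the $R \times N$ submatrix of $S$ formed by the rows indexed by $m(p, a)$. By separability $\bar{S} = DH$ for a positive diagonal $D \in \R^{R \times R}$; since passing to a submatrix does not increase $\rank_+$ and left-multiplication by an invertible positive diagonal matrix preserves it, $\rank_+(S) \geq \rank_+(\bar{S}) = \rank_+(H)$. I would then establish $\rank_+(H) \geq R$ via the rectangle-cover bound: any nonnegative factorization $H = UV$ with $U \in \R_+^{R \times R'}$, $V \in \R_+^{R' \times N}$ expresses $H$ as a sum of $R'$ nonneg rank-one matrices whose supports are combinatorial rectangles $I \times J$ contained in the support of $H$, so $\rank_+(H)$ is lower bounded by the minimum number of such rectangles that cover the support of $H$. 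A rectangle $I \times J$ inside the support of $H$ can contain at most one row per part block, because rows $r(p, a)$ and $r(p, a')$ with $a \neq a'$ have disjoint supports (each column of $H$ picks exactly one articulation per part). Hence, writing $|I| = k$ for the number of parts represented by $I$ and reducing without loss of generality to the case where $H$ has no duplicate columns (a reduction that preserves $\rank_+$), we have $|J| = A^{P - k}$ and the rectangle covers $k \cdot A^{P - k}$ entries. The maximum is $\max_{1 \leq k \leq P} k \cdot A^{P - k} = A^{P - 1}$, attained at $k = 1$. Since the support of $H$ contains $P \cdot A^P$ nonzeros, any cover uses at least $P A^P / A^{P - 1} = PA = R$ rectangles, so $\rank_+(H) \geq R$.

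\textbf{Main obstacle.} The rank-drop calculation is short, but the nonnegative rank lower bound is the real content. Donoho--Stodden uniqueness (Theorem~\ref{thm:sep}) only rules out alternative factorizations inside $\col(W)$ and gives no leverage against factorizations of smaller inner dimension whose column space escapes $\col(W)$. The rectangle-cover argument trades $\rank_+$ for a purely combinatorial invariant of the zero--nonzero pattern of $H$, at the cost of verifying the disjoint-support structure of same-part rows, the elementary extremum $\max_k k A^{P - k} = A^{P - 1}$, and the reduction to the no-duplicate-column case so that the total-nonzero count and the rectangle-size bound remain consistent.
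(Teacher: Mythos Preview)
Your argument is correct and takes a genuinely different route from the paper on both halves of the claim.

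For the lower bound $\rank_+(S)\ge R$, the paper leans on Theorem~\ref{thm:sep}: if $S$ admitted a nonnegative factorization $S=W'H'$ of inner dimension $R'<R$, then (the paper asserts) $\col(W')\subset\col(W)$, and $W'$ can be non-uniquely padded out to inner dimension $R$ by adjoining extra columns chosen from $\col(W)\setminus\col(W')$, contradicting uniqueness in $\col(W)$. Your rectangle-cover argument instead passes to $\rank_+(H)$ via the separable rows and bounds that purely combinatorially. This is more self-contained---it does not invoke Donoho--Stodden uniqueness as a black box---and produces an explicit combinatorial certificate, at the price of a longer calculation; the paper's route is essentially a one-line deduction once Theorem~\ref{thm:sep} is available.

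For the rank drop, the paper works on the column side of $S$, exhibiting one explicit linear dependence among $2A$ specific columns assembled via Complete Factorial Sampling. You instead work on the row side of $H$: separability forces $\rank(S)=\rank(H)$, and because the $A$ rows of $H$ within each part block are indicator vectors of a partition of $\{1,\dots,N\}$ and hence sum to $\mathbf{1}^T$, you extract $P-1$ independent row relations and the sharper estimate $\rank(S)\le R-(P-1)$. This is both cleaner and more informative than the paper's single relation, and it sidesteps the need to explain why a dependence among a particular subset of columns of $S$ actually forces $\rank(S)<R$ rather than merely $\rank(S)<N$.
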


\begin{proof}
Note that $\rank_+(S) \leq R$. By Theorem \ref{thm:sep}, we know that the rank-$R$ NMF of $S$ is unique in the subspace spanned by $W$. Suppose that $S$ has a rank-$R'$ NMF, $S = W'H'$, with $R' < R$. Then $\col(W') \subset \col(W)$ and the matrix $W'$ can be non-uniquely extended to give a rank-$R$ NMF, by augmenting it with linearly independent vectors in $\col(W) - \col(W')$. Thus, we must have that $\rank_+(S) \geq R$ and hence $\rank_+(S) = R$. 

Fix orderings of the $P$ parts and $A$ articulations. Assume that the columns in $W$ are lexicographically indexed by the ordered pair of part and articulation, $(p, a)$.  By the Complete Factorial Sampling property (R2 in Theorem \ref{thm:sep}) of $S = WH$, we have that the column vectors of $S$ include
$
\sum_{p} W_{p, a}
$ for each articulation $1 \leq a \leq A$,  also, $W_{1, a+1} + \sum_{p>1} W_{p, a}$ for each $1 \leq a <A$. Finally, by the same token, we have that $W_{1, 1} + \sum_{p>1} W_{p, A}$ appears as a column vector of $S$. But since 
\scriptsize{\begin{align*}
W_{1, 1} + \sum_{p>1} W_{p, A}  = \sum_{a}\sum_{p}W_{p, a} + \sum_{a<A} \left(W_{1, a+1} + \sum_{p>1} W_{p, a}\right)
\end{align*}}
\normalsize it follows that $\rank(S) < R$. 
\end{proof}

We now formalize the intuition behind Example \ref{ex:type2} and describe how separable factorial articulation families can be augmented to provide NMF models that exhibit non-identifiability of Type II.

In the following, we fix orderings of the $P$ parts and $A$ articulations. Assume that the columns in $W$ are lexicographically indexed by part and articulation, $(p, a)$.

\begin{thm}\label{thm:type2}
Given a row vector $\bv \in \R^{R}$ such that $\bv_{\tilde{p}, a}\geq 0$ for a fixed part $\tilde{p}$ and all articulations $a$. Let $W' = \left(\begin{matrix}W\\ \bv \end{matrix}\right)$. Then $S' = W'H$ represents an non-identifiable NMF model of Type II.

In particular, $S' = W'H$ has an infinite number of NMFs.
\end{thm}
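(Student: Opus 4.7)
The plan is to exhibit a continuous family of rank-$R$ NMFs $S' = W^{(\gamma)} H$ that share the coefficient matrix $H$ and the upper block $W$ of $W^{(\gamma)}$, differing only in the appended row, and then to show these factorizations sit in distinct column spaces so the non-identifiability is genuinely Type II.

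The combinatorial core is that complete factorial sampling (condition R2 of Theorem \ref{thm:sep}) forces each column $n$ of $H$ to select exactly one articulation $a_p(n)$ per part $p$, so the bottom-row entry of $S'$ at column $n$ equals $\sum_p \bv_{p, a_p(n)}$. Given any $\gamma = (\gamma_1, \dots, \gamma_P) \in \R^P$ with $\sum_p \gamma_p = 0$, I define $\bv^{(\gamma)}$ by $\bv^{(\gamma)}_{p,a} = \bv_{p,a} + \gamma_p$, so the shift is constant across articulations within each part. A direct computation gives
\begin{align}
\sum_p \bv^{(\gamma)}_{p, a_p(n)} \;=\; \sum_p \bv_{p, a_p(n)} + \sum_p \gamma_p \;=\; \sum_p \bv_{p, a_p(n)},
\end{align}
so setting $W^{(\gamma)}$ equal to $W$ with $\bv^{(\gamma)}$ appended still satisfies $W^{(\gamma)} H = S'$. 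Using $\bv_{\tilde{p}, a}\geq 0$ together with nonnegativity of $\bv$ on some other part $p^*$, the choice $\gamma_{\tilde{p}} = -t$, $\gamma_{p^*} = +t$, all other $\gamma_p = 0$ keeps $\bv^{(\gamma)}\geq 0$ for $t$ in a nondegenerate interval, yielding an infinite family of NMFs.

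To argue Type II rather than Type I, I compare $\col(W')$ and $\col(W^{(\gamma)})$. Because separability forces $R$ rows of $W$ to form a nonsingular diagonal sub-matrix, $W$ has full column rank $R$. Each column of $W^{(\gamma)}$ therefore has the form $(w_r, \bv^{(\gamma)}_r)$ with $\{w_r\}$ a basis of $\col(W)\subset\R^M$, so $\col(W^{(\gamma)})$ is the graph in $\R^{M+1}$ of the linear functional $x = \sum_r c_r w_r \mapsto \sum_r c_r \bv^{(\gamma)}_r$. Two such graphs coincide iff the functionals agree on $\col(W)$, iff $\bv^{(\gamma)} = \bv$, which fails for any nonzero admissible $\gamma$. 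Hence $\col(W') \neq \col(W^{(\gamma)})$, so no invertible $Q\in\R^{R\times R}$ can satisfy $W'Q = W^{(\gamma)}$, placing the non-identifiability in Type II of Theorem \ref{thm:framework}.

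The main subtlety I anticipate is verifying that the admissible interval of $t$ is genuinely nondegenerate, so that the family really is infinite rather than a single point. This requires either $\min_a \bv_{\tilde{p}, a} > 0$ or an alternative part $p^*$ with $\min_a \bv_{p^*, a} > 0$; otherwise the only admissible $\gamma$ is zero. For any nonzero nonnegative $\bv$ at least one such strictly positive minimum must exist in some part, and in the formal proof I would make this step explicit rather than implicit, as it is the only place where the hypothesis $\bv_{\tilde{p}, a} \geq 0$ is actually used in a nontrivial way.
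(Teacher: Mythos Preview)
Your construction is essentially the paper's: the paper also shifts the appended row by $-\epsilon$ on part $\tilde p$ and $+\epsilon$ on a second part $\tilde p'$, checks $\bv'\bh=\bv\bh$ column by column via complete factorial sampling, and then asserts the two column spaces differ because the columns of $W$ are linearly independent (your graph-of-a-functional argument makes this last step more explicit than the paper does).

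One caution on your closing paragraph: the claim that ``for any nonzero nonnegative $\bv$ at least one such strictly positive minimum must exist in some part'' is false---take $P=A=2$ and $\bv=(1,0,1,0)$, where every part has minimum zero. The paper does not attempt any such recovery; it simply uses $0\le\epsilon\le\min_a\bv_{\tilde p,a}$ and implicitly reads the hypothesis as $\bv_{\tilde p,a}>0$ (otherwise the interval is a point, as you noticed). You should adopt the same reading rather than try to manufacture strict positivity from a different part.
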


\begin{proof}
Let $\epsilon_{\tilde{p}} = \min_{a}\{ \bv_{\tilde{p}, a} \}$ and choose $\tilde{p}' \neq \tilde{p}$. Define $\bv' \in \R^{R}$ by 
\[
\bv'_{p, a} = 
\begin{cases}
\bv_{p,a} - \epsilon_{\tilde{p}}, & p = \tilde{p}\\
\bv_{p,a} + \epsilon_{\tilde{p}}, & p = \tilde{p}'\\
\bv_{p, a}, & \text{otherwise}
\end{cases}
\]
It's straightforward to see that 
$\left(\begin{matrix}
W\\ 
\bv'
\end{matrix}\right)H = 
\left(\begin{matrix}
W\\ 
\bv 
\end{matrix}\right)H$. Given a column vector $\bh$ of $H$, it suffices to check that $\bv'\bh = \bv\bh$. Since $S = WH$ is a complete factorial family, we have that
\[
\bv'\bh = v'_{\tilde{p}, a_{\tilde{p}}} + v'_{\tilde{p}', a_{\tilde{p}'}} + \sum_{p\neq \tilde{p}, \tilde{p}'} v'_{p, a_p}
\]
where $1 \leq a_p, a_{\tilde{p}}, a_{\tilde{p}'}\leq A$. Thus
\[
\bv'\bh = v_{\tilde{p}, a_{\tilde{p}}} - \epsilon_{\tilde{p}} + v_{\tilde{p}', a_{\tilde{p}'}} + \epsilon_{\tilde{p}} + \sum_{p\neq \tilde{p}, \tilde{p}'} v_{p, a_p} = \bv\bh.
\]

Note that since the set of column vectors $W$ is linearly independent, we have that $\col\left(\begin{matrix}
W\\ 
\bv'
\end{matrix}\right)H \neq
\col\left(\begin{matrix}
W\\ 
\bv 
\end{matrix}\right)H$. 

Finally, since the above holds for any choice of $0\leq \epsilon \leq \min_{a}\{ \bv_{\tilde{p}, a} \}$, $S' = W'H$ has an infinite number of NMFs. 
\end{proof}

On the other hand, given a separable factorial articulation family, it is also straightforward to determine if it exhibits non-identifiability of Type II.

\begin{cor}\label{cor:sepfam}
If $W$ contains a row vector $\bv \in \R^{R}$ such that $\bv_{p, a}\geq 0$ for a fixed part $p$ and all articulations $a$. Then the NMF $S = WH$ is unidentifiable of Type II. In particular, $S$ has an infinite number of NMFs.
\end{cor}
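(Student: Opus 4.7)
The plan is to recognize Corollary~\ref{cor:sepfam} as a direct application of Theorem~\ref{thm:type2}, simply by ``peeling off'' the distinguished row $\bv$ from $W$. Let $\tilde{W}$ denote $W$ with the row $\bv$ removed; up to a harmless row permutation, $W$ is exactly the vertical augmentation of $\tilde{W}$ by $\bv$, i.e., precisely the matrix $W'$ that appears in the hypothesis of Theorem~\ref{thm:type2}. So it suffices to show that $(\tilde{W}, H)$ still constitutes a separable factorial articulation family, after which Theorem~\ref{thm:type2} immediately produces the alternative factorization.

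The main step---and the one I anticipate being slightly delicate---is verifying that $\tilde{W}H$ inherits the separable factorial articulation structure from $WH$. Conditions [R1] (generative model) and [R2] (complete factorial sampling) in Theorem~\ref{thm:sep} depend only on $H$ and on the combinatorial indexing of parts and articulations, not on which rows of $W$ are retained, so both survive the deletion. For separability [R3], each required row $m(p,a)$ of $W$ has, by definition, exactly one nonzero entry---in column $r(p,a)$. The hypothesis on $\bv$ provides nonzero entries in all $A \geq 3$ columns corresponding to articulations of the fixed part $p$, so $\bv$ carries at least three nonzero entries and cannot be any of the separability rows. Consequently every separability row of $W$ survives in $\tilde{W}$, and [R3] is preserved.

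With $(\tilde{W}, H)$ confirmed to be a separable factorial articulation family, Theorem~\ref{thm:type2} applied to $\tilde{W}$ and $\bv$ supplies, for each $\epsilon \in (0, \min_{a}\{\bv_{p,a}\}]$, a distinct perturbed row $\bv'$ with $(\tilde{W};\bv')H = (\tilde{W};\bv)H$. Re-inserting $\bv'$ into the original row position of $\bv$ yields a nonnegative matrix $W'$ with $W'H = WH = S$. The distinctness $\col(W') \neq \col(W)$---which places the non-identifiability squarely in Type II---follows, as in Theorem~\ref{thm:type2}, from $\bv \neq \bv'$ together with the linear independence of the columns of $\tilde{W}$, and the continuous range of $\epsilon$ immediately gives the asserted infinite family of factorizations.
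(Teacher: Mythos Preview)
Your proposal is correct and follows exactly the paper's approach: remove the distinguished row $\bv$ to obtain $\tilde{W}$, observe that $\tilde{W}H$ remains a separable factorial articulation family, and invoke Theorem~\ref{thm:type2}. You actually supply more justification than the paper does---in particular, your argument that $\bv$ cannot coincide with any separability row (since it has at least $A\geq 3$ nonzero entries in the columns of part $p$) makes explicit a step the paper simply asserts.
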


\begin{proof}
Let $\tilde{W}$ be the matrix $W$ with the row vector $\bv$ removed. Then $\tilde{W}H$ is a separable factorial articulation family. Thus, by Theorem \ref{thm:type2}, we see that $S = \left(\begin{matrix}\tilde{W}\\\bv \end{matrix} \right)H$ is non-identifiable of Type II and $S$ has an infinite number of NMFs resulting from distributing the values in $v_{p, a}$ to $v_{p', a}$, for $p \neq p'$. 
\end{proof}

The converse of Corollary \ref{cor:sepfam} is also true. Thus, we obtain a complete characterization for when a separable factorial articulation family is non-identifiable of Type II. 

\begin{thm}\label{thm:typeI}
$S = WH$ is non-identifiable of Type II if and only if $W$ contains a row vector $\bv \in \R^{R}$ such that $\bv_{p, a}\geq 0$ for a fixed part $p$ and all articulations $a$.
\end{thm}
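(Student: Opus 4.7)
The forward direction is Corollary \ref{cor:sepfam}, so I focus on the converse by contrapositive: I assume that for every row $\bv$ of $W$ and every part $p$, $\min_{a}\bv_{(p,a)} = 0$, and I aim to show that any alternative nonnegative factorization $S = W'H'$ must satisfy $\col(W') = \col(W)$, ruling out Type II non-identifiability.

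The first step is a reduction to the case $H' = H$. The separability condition [R3] singles out indicator rows $\{m(p,a)\}$ of $W$, and restricting $S$ to these rows yields $\tilde S = DH$ for a positive diagonal matrix $D$. Since $D$ is trivially separable (each column has a single nonzero entry) and $H$ is still complete factorial, $\tilde S = DH$ is itself a separable factorial articulation family whose basis spans all of $\R^R$. Applying Theorem \ref{thm:sep} to the restricted factorization $\tilde S = W'_{\{m(p,a)\},\cdot}\,H'$ gives $H' = M^{-1} H$ for a monomial matrix $M$; absorbing $M$ into $W'$, which does not change $\col(W')$, I may henceforth assume $H' = H$.

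The second step is to compute $\Ker(H^T)$. Each column of $H$ has the form $\sum_p e_{(p, a_p)}$ for some articulation choice $(a_p)$, so $\bu \in \Ker(H^T)$ iff $\sum_{p} u_{(p, a_p)} = 0$ for every such tuple. Subtracting two tuples that differ only in the articulation of a single part forces $u_{(p, a)}$ to be independent of $a$; writing $u_{(p, a)} = c_p$, I obtain
\[
\Ker(H^T) = \left\{ \bu \in \R^R : u_{(p,a)} = c_p \text{ for constants } c_p \text{ with } \textstyle\sum_p c_p = 0 \right\},
\]
a subspace of dimension $P-1$.

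Finally, the relation $(W' - W)H = 0$ places every row of $W' - W$ inside $\Ker(H^T)$, so $W'_{m, (p, a)} = W_{m, (p, a)} + c_{m, p}$ with $\sum_p c_{m, p} = 0$. Nonnegativity of $W'$ requires $c_{m, p} \geq -\min_{a} W_{m, (p, a)}$, which equals zero under the standing hypothesis, so $c_{m, p} \geq 0$. Combined with $\sum_p c_{m, p} = 0$, this forces $c_{m, p} = 0$ for every $m$ and $p$, and therefore $W' = W$, contradicting $\col(W') \neq \col(W)$. The principal obstacle is the reduction step: a priori the matrix $H'$ need not resemble the factorial design $H$ at all, but the rigidity supplied by the indicator rows (an internal copy of an SFAF with a diagonal basis) together with Theorem \ref{thm:sep} is enough to identify $H'$ with $H$ up to monomial rescaling, after which the kernel computation becomes a direct linear-algebraic argument.
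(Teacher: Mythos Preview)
Your proof is correct and follows the same two-stage architecture as the paper: first restrict to the separability rows and invoke Theorem~\ref{thm:sep} to force $H'=H$ up to a monomial, then compare the remaining rows of $W$ and $W'$. The difference is only in the second stage. The paper reduces (without justification) to the case $M=R+1$ and then evaluates $W_{\text{lower}}H$ and $W'_{\text{lower}}H$ at the specific factorial columns $(1,\ldots,1)$ and $(1,\ldots,a,\ldots,1)$ to pin down each entry; you instead describe $\Ker(H^T)$ globally and use the nonnegativity constraint together with $\sum_p c_{m,p}=0$ to kill all rows at once. These are the same idea in different clothing, but your kernel formulation is cleaner, handles arbitrary $M$ without a WLOG step, and makes transparent exactly which structural property of $H$ is being used. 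One small imprecision: your ``iff'' description of $\Ker(H^T)$ is only guaranteed to be an inclusion if $H$ is allowed to carry columns beyond the $A^P$ factorial ones, but the argument only needs that inclusion, so nothing is lost.
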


\begin{proof}
Let $S = WH = W'H'$. We show that $W' = W$ and $H' = H$, when $W$ does not contain a row containing a part with entirely nonzero values.

Without loss of generality, assume that $W, W'\in \R^{(R+1)\times R}$. Indexing the column of $W$ lexicographically by part and articulation, suppose also that $W$ does not contain a row in which $v_{p, a}$ for a fixed par $p$ and all articulations $a$. Since $W$ is separable, we may assume that $W$ is presented in the form
\begin{align}
W = \left( 
\begin{matrix}
I_{R\times R}\\
W_{\text{lower}}
\end{matrix}
\right)
\end{align}
where $W_{\text{lower}}$ is a row vector. Now, let $S_{\text{upper}}$, $W'_{\text{upper}}$ be sub-matrices of $S$ and $W'$, respectively, each consisting of the first $R$ rows. Then, we have that $S_{\text{upper}} = I_{R\times R} H = W_{\text{upper}}H'$. Since $S_{\text{upper}} = I_{R\times R} H$ is a separable factorial articulation family, we have, from Theorem \ref{thm:sep}, that this factorization is unique in $\col(I_{R\times R}) = \R^R$. Hence, we obtain $W_{\text{upper}} = I_{R\times R}$ and $H = H'$. In other words, $W$ and $W'$ may only differ in their respective last row. 

By assumption, for any part $p$, we must have some articulation $\tilde{a}_p$ such that $(W_{\text{lower}})_{(p, \tilde{a}_p)} = 0$. Without loss of generality, we shall assume that $\tilde{a}_p = 1$. Since $H$ is a complete factorial sampling, the sum $\sum_{p = 1}^P (W_{\text{lower}})_{(p, 1)} = 0$ appears in the last row of $S$. On the other hand, we have that $W_{\text{lower}}H = W'_{\text{lower}}H$ and hence we have that $ (W_{\text{lower}})_{(p, 1)} = 0$ for each part $p$. Thus, we obtain
\begin{align}
(W_{\text{lower}})_{(p', a)} &= (W_{\text{lower}})_{(1, a)} + \sum_{p\neq p'} (W_{\text{lower}})_{(p, 1)} \\
&= (W'_{\text{lower}})_{(1, a)} + \sum_{p\neq p'} (W'_{\text{lower}})_{(p, 1)} \\
&= (W'_{\text{lower}})_{(p', a)}
\end{align}
for each articulation $a$ and each part $p'$. That is, $W = W'$.
\end{proof}

The immediate algorithmic implication of the results in this section is that, for separable factorial articulation families, the entire set of factorizations of $S$ can be obtained from a single factorization, $S = WH$. The following theorem formalizes the intuition in Example (\ref{ex:type2}).

\begin{thm}
Let $\bv \in \R^{R}$ be a row vector of $W$ such that $\bv_{p, a}\geq 0$ for a fixed part $1\leq p\leq P$ and all articulations $1\leq a\leq A$. The points on the standard $p$-simplex uniquely parametrizes a set of nonnegative factorizations of $S$. 
\end{thm}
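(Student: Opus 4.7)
The plan is to reduce the statement to a characterization of the affine subspace $\{\bv' : \bv' H = \bv H\}$ intersected with the nonnegative orthant, then exhibit an explicit affine bijection between this set and the standard $(P-1)$-simplex. The starting point is Theorem~\ref{thm:typeI}: every alternative factorization of $S$ fixes $H$ and fixes the separable (identity-block) rows of $W$, so the only remaining freedom is in choosing a replacement for the ``extra'' row $\bv$ satisfying the positivity-on-part-$p$ hypothesis.

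First I would extract the equivalence constraints from the complete factorial sampling property of $H$. Indexing the columns of $H$ by tuples $(a_1,\ldots,a_P)\in[A]^P$ (so that the column indexed by such a tuple has a $1$ exactly in positions $(p', a_{p'})$), the requirement $\bv'\bh=\bv\bh$ becomes $\sum_{p'} v'_{p', a_{p'}} = \sum_{p'} v_{p', a_{p'}}$. Taking pairwise differences over tuples that agree in every coordinate except one, I would deduce that $\delta_{p', a}:= v_{p', a}-v'_{p', a}$ depends only on $p'$; call the common value $\delta_{p'}$. Plugging back into a single tuple equation then collapses all remaining relations to the single linear constraint $\sum_{p'}\delta_{p'}=0$.

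Next I would impose nonnegativity of $\bv'$. The constraint $v'_{p', a}\ge 0$ becomes $\delta_{p'}\le\epsilon_{p'}:=\min_a v_{p', a}$, where $\epsilon_p>0$ by hypothesis on the distinguished part $p$. Reparametrizing $\mu_{p'}:=\epsilon_{p'}-\delta_{p'}$ turns the feasible set into $\{\mu\in\R_{\ge 0}^P : \sum_{p'}\mu_{p'}=C\}$ with $C:=\sum_{p'}\epsilon_{p'}>0$, which is the standard $(P-1)$-simplex scaled by $C$. The map $\mu\mapsto\bv'$ is then an affine bijection, yielding the desired parametrization; this matches the one-parameter family $(1-t,1-t,1-t,t,t,t)$ seen for $P=2$ in Example~\ref{ex:type2}.

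The main obstacle is the combinatorial bookkeeping in translating between part/articulation indexing and the columns of $H$; the linear algebra itself is routine. Uniqueness of the parametrization (distinct simplex points giving distinct factorizations) follows because $\bv'$ is affinely determined by $\delta$ and the independence of the surviving constraint means no nontrivial redundancy survives in $\mu$-space.
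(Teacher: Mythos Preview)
Your argument is correct but takes a substantially different and stronger route than the paper. The paper's proof is a one-line observation building directly on the construction of Theorem~\ref{thm:type2}: with $\epsilon_p=\min_a\bv_{p,a}$, every nonnegative tuple $(t_1,\dots,t_P)$ summing to $\epsilon_p$ yields an alternative row by subtracting $\epsilon_p$ from part $p$ and adding $t_{p'}$ to each part $p'$, so the simplex of such distributions injects into the factorization set. That suffices because the statement only asks to parametrize \emph{a set} of factorizations. You instead characterize the \emph{entire} feasible set $\{\bv'\ge 0:\bv'H=\bv H\}$: complete factorial sampling forces $\delta_{p',a}=v_{p',a}-v'_{p',a}$ to depend only on $p'$ with $\sum_{p'}\delta_{p'}=0$, and nonnegativity forces $\delta_{p'}\le\epsilon_{p'}$ for every part, giving a simplex of scale $C=\sum_{p'}\epsilon_{p'}\ge\epsilon_p$. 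Combined with the intermediate step from the \emph{proof} of Theorem~\ref{thm:typeI} (that $H'=H$ and the separable rows of $W'$ coincide with those of $W$ --- note you are really invoking that argument, not the theorem's statement, and should say so), this is a full description of the factorizations obtainable by replacing the row $\bv$, not merely an embedded family. The extra work buys you a genuine characterization; the paper's approach buys brevity.
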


\begin{proof}
Let $\epsilon_p$ denote $\min_{a} \bv_{p,a}$. The claim follows immediately from the observation that every distribution of $\epsilon_p$ over the $P$ number of parts in $\bv$ gives rise to a alternate factorization of $S$.
\end{proof}

\subsection{Example of Type II Non-identifiability}
The Swimmer data set, first introduced by Lee et al. in a demonstration that NMF is able to produce interpretable decomposition of data into parts \citep{Lee}, is a commonly cited example of a separable factorial articulation family \citep{Donoho}. We describe factorizations performed on this data set, here again, to illustrate the theoretical results in this section. 

The Swimmer dataset contains 256 grey-scale images of a swimmer with all possible combinations of 4 limbs positions \citep{Lee}. Figure \ref{fig:swim}-a is a sample of images from the dataset. The Swimmer dataset is represented as a matrix $M \in \R^{1024\times256}$, each column of which represents an $32 \times 32$ image flattened as a vector. The basis matrix $H\in \R^{1024 \times R}$ learned by an NMF represents a set of images depicting ``parts" that sum-up to each swimmers in the dataset (Figure \ref{fig:swim_nmf}-a).

% Swimmer 
\begin{figure}[h!]
\begin{center}
\caption{Sample images from variations on the Swimmer dataset}
\label{fig:swim}
\subfigure[With an invariant region (body)]{\includegraphics[width=0.95\columnwidth]{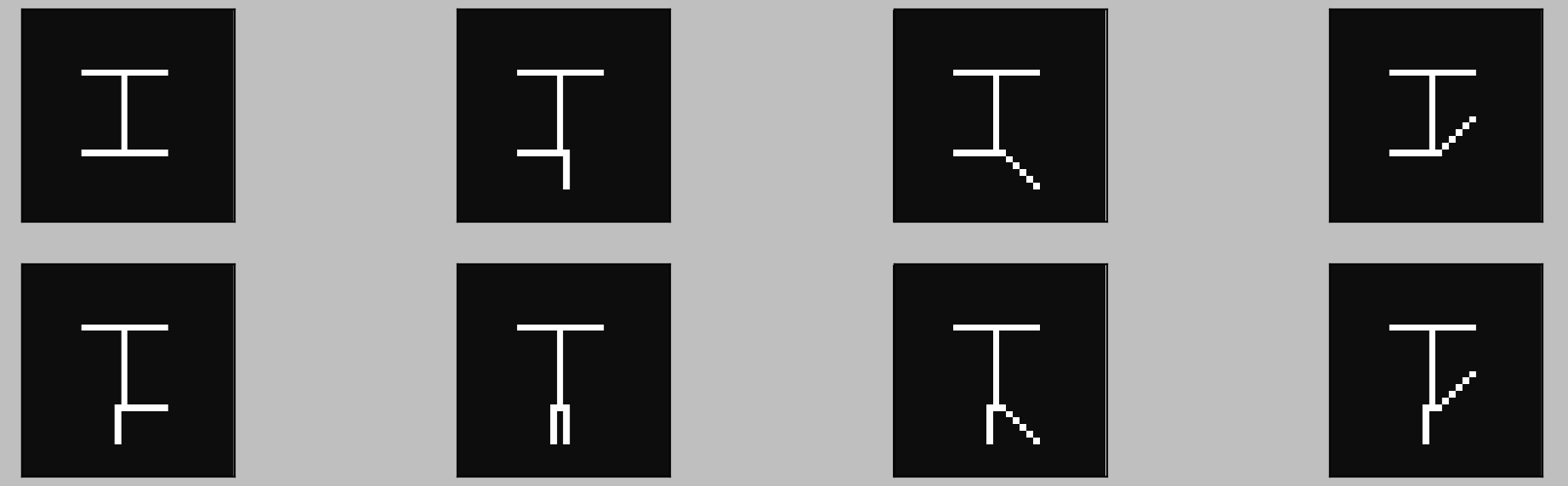}}%height=30mm
\vskip0.1cm
\subfigure[Without the invariant region (body)]{\includegraphics[width=0.95\columnwidth]{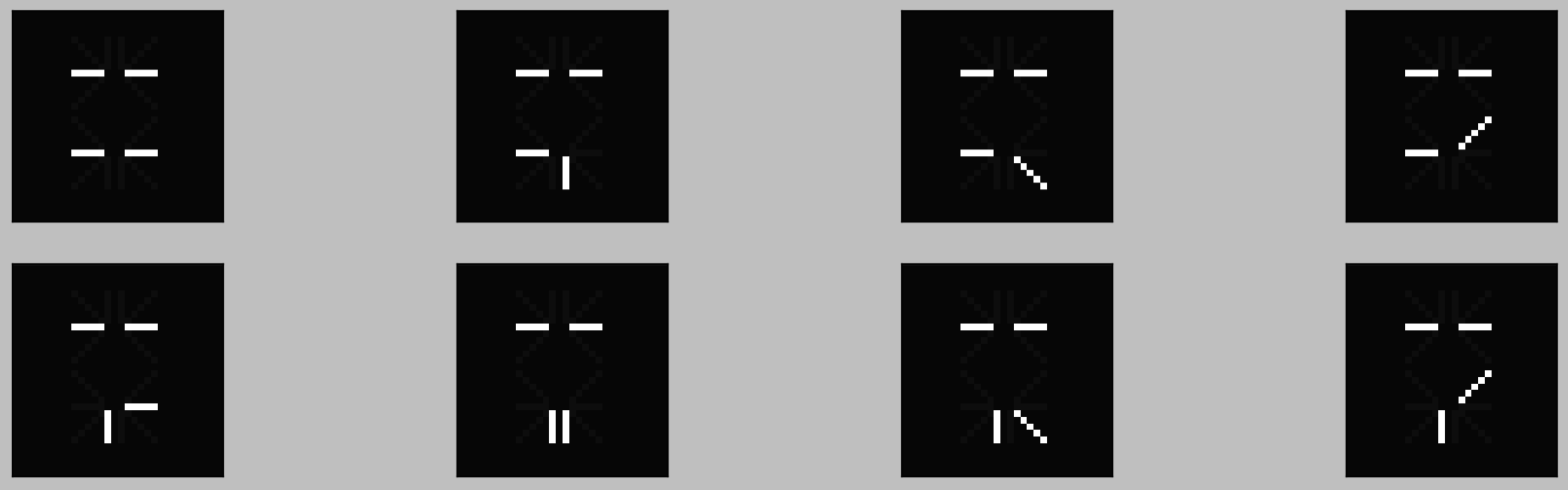}}%height=30mm
\end{center}
\vskip -0.2in
\end{figure} 

Removing the body from each swimmer results in a dataset (Figure \ref{fig:swim}-b) with a unique NMF, the basis of which consists of the four parts (limbs) and four articulations (limb positions). The Swimmer dataset has an infinite number of NMF's, parametrized by distributions of the pixels of the body amongst the four parts (where the pixel intensity is equally distributed amongst the articulations of each part). Note that two different distributions results in bases matrices spanning distinct subspaces. Figure \ref{fig:swim_nmf} shows two bases with different distributions of the body.

% Swimmer nmf
\begin{figure}[h!]
\begin{center}
\caption{Basis images for Swimmer provided by NMF}
\label{fig:swim_nmf}
\subfigure[With an invariant region (body)]{\includegraphics[width=0.95\columnwidth]{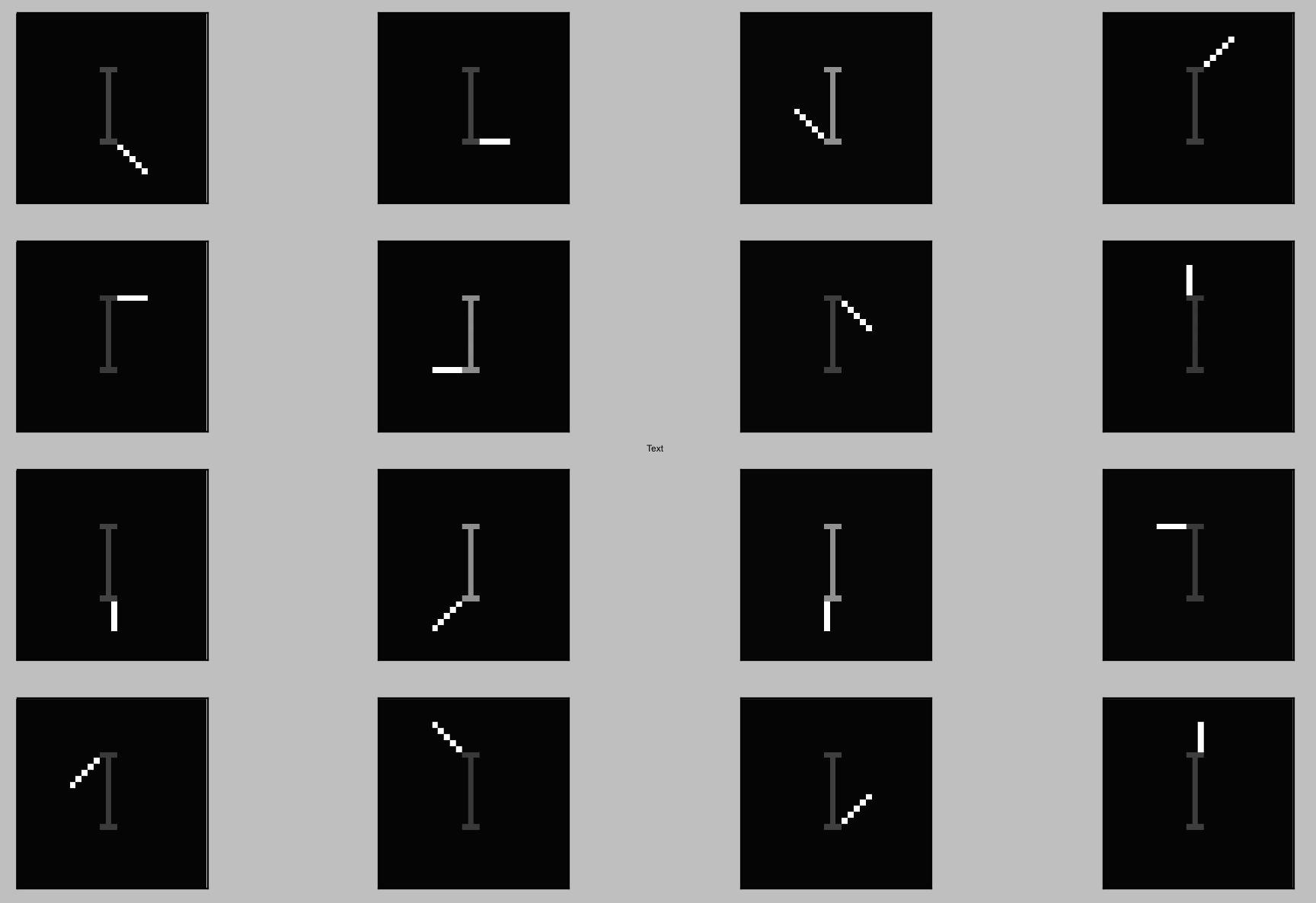}}%height=30mm
\vskip0.1cm
\subfigure[With a different distribution of the invariant region (body)]{\includegraphics[width=0.95\columnwidth]{swimmer_nmf_rand}}%height=30mm
\end{center}
\vskip -0.2in
\end{figure}

%\section{Characterizing Type III Non-identifiability}

\section{Discussion}
Whereas existing work in literature have provided necessary and/or sufficient conditions for the uniqueness of specific classes of models \citep{Donoho, Laurberg, Huang}, there has not been a context in which these models can be related in a unified fashion. In particular, in literature on the uniqueness of NMFs, it is always assumed that $\rank_+(S) = \rank(S)$ and therefore two distinct factorizations of $S$ differ by an $R\times R$ change of bases matrix $Q$ (Type I). However, this assumption excludes many models generated from separable factorial articulation families (the Swimmer example of \citep{Donoho}, for example), whose distinct factorizations differ by an $M\times M$ change of bases matrix (Type II). In this paper, we have introduced a complete framework for characterizing the non-identifiability of NMF models. Such a framework makes possible and provides the essential foundation for principled explorations of the non-identifiability of NMFs. In this section, we describe a few directions of these explorations.  

\subsection{Characterization of Unidentifiable Indecomposable Models}
The results of Section \ref{sec:decomp} imply that the non-identifiability of block-diagonal models can be characterized by the non-identifiability of indecomposable models. An interesting future direction would be to combinatorially classify indecomposable models and to fully characterize the non-identifiability of classes of simple models.   Then one could demonstrate that block-diagonal models can generally be reasonably approximated by a direct sum of a subset of aforementioned simple models. One of the goals for this characterization is the systematic generation of alternate solutions in NMF applications. 

Toward this end, in this work, we show that the block-diagonal decomposition of an NMF model is obtained by finding an initial factorization $S = GW$, where $W$ is a block-diagonal matrix (with multiple blocks).  This naturally suggests that we could efficiently find the block structure in Type I unidentifiable models if we could efficiently discover the super-cone $G$.  We hypothesize that the geometry of the columns of $G$ may provide insight into efficient approximations for $G$.  

\subsection{Model Checking for Type II Non-identifiability}
In Section \ref{sec:sep}, we provided a complete characterization of non-identifiability for a class of separable factorial articulation families. For these results to be easily applicable, one would ideally have a method of determining whether a given matrix $S$ is generated by a separable factorial articulation family that is robust under the presence of noise.  Exploiting the geometry of the column vectors of $S$ to verify whether it comes from a factorial articulation family is the subject of current work. 

\subsection{Approximate NMFs}
Finally, this work has only addressed exact factorizations of matrices. However, in the presence of noise, matrix factorization is performed with some tolerance for error. That is, we usually seek factorizations such that $WH$ such that $\|S - WH\|_F \leq \epsilon$, for $\epsilon>0$. Here, a rigorous understanding of the exact solution space of the underlying NMF model provides insight about the geometry and topology of the set of approximate solutions. For example, the number and forms of exact solutions correspond to the number and location of the modes in the approximate solution space. Furthermore, the approximate solution space, say for the matrix $W$, can be realized the union of closed $\epsilon$-ball centered at each $W$ that is a factor for an exact solution. Thus, we expect the geometry and topology of the exact solution space to dictate that of the approximate solution space.  We expect that our characterization of the exact case will accelerate future work in characterizing non-identifiability in, as well as traversing the solution space of, approximate NMF problems.

%%%%%%%%%%%%%%%%%%
\section*{Acknowledgments}   
We thank Arjumand Masood for many helpful discussions and insights.

% In the unusual situation where you want a paper to appear in the
% references without citing it in the main text, use \nocite
%\nocite{langley00}

\bibliographystyle{icml2015}
\bibliography{arxiv}

\end{document}